\newtheorem{mydef}{Definition}
\newtheorem{thm}{Theorem}
\newtheorem{lem}{Lemma}
\title{Towards Safe Reinforcement Learning via Constraining Conditional Value-at-Risk}
\author{
Chengyang Ying$^1$
\and
Xinning Zhou$^1$\and
Hang Su$^{\ast 1,2,3}$\and
Dong Yan$^1$\and
Ning Chen$^1$\And
Jun Zhu\thanks{Corresponding author.} $^{1,2,3}$
\affiliations
 $^{1}$Department of Computer Science \& Technology, Institute for AI, BNRist Center,\\
Tsinghua-Bosch Joint ML Center, THBI Lab, Tsinghua University;\\
$^{2}$Peng Cheng Laboratory\\
$^{3}$ Tsinghua University-China Mobile
Communications Group Co., Ltd. Joint Institute\\
% $^3$Third Affiliation\\
% $^4$Fourth Affiliation
\emails
ycy21@mails.tsinghua.edu.cn,
\{coderlemon, sproblvem17\}@gmail.com,\\
\{suhangss, ningchen, dcszj\}@mail.tsinghua.edu.cn
}
\definecolor{mydarkblue}{rgb}{0,0.08,0.45}
\definecolor{mydarkgreen}{RGB}{0, 139, 69}
\newtheorem*{rep@theorem}{\rep@title}
\newcommand{\newreptheorem}[2]{%
\newenvironment{rep#1}[1]{%
 \def\rep@title{#2 \ref{##1}}%
 \begin{rep@theorem}}%
 {\end{rep@theorem}}}
\begin{document}
\maketitle
\begin{abstract}
  Though deep reinforcement learning (DRL) has obtained substantial success, it may encounter catastrophic failures due to the intrinsic uncertainty of both transition and observation. Most of the existing methods for safe reinforcement learning
  %are often based on transforming the optimization criterion and adopting the worst case of the return as a measure of uncertainty. However, these methods
  can only handle transition disturbance or observation disturbance since these two kinds of disturbance affect different parts of the agent;   besides, the popular worst-case return
  may lead to overly pessimistic policies. To address these issues, we first theoretically prove that the performance degradation under transition disturbance and observation disturbance depends on a novel metric of Value Function Range (VFR), which corresponds to the gap in the value function between the best state and the worst state. Based on the analysis, we adopt conditional value-at-risk (CVaR) as an assessment of risk and propose a novel reinforcement learning algorithm of CVaR-Proximal-Policy-Optimization (CPPO) which formalizes the risk-sensitive constrained optimization problem by keeping its CVaR under a given threshold.
  %, which is defined as the Value Function Range (VFR) in this paper.
 % Moreover, we propose to adopt conditional value-at-risk (CVaR) as an assessment of risk, which guarantees that the probability for reaching a catastrophic state is below the desired threshold by avoiding overly pessimistic policies at the same time. Furthermore, we present a novel reinforcement learning framework of CVaR-Proximal-Policy-Optimization (CPPO) which formalizes the risk-sensitive constrained optimization problem by keeping its CVaR under a given threshold. 
 Experimental results show that CPPO achieves a higher cumulative reward and is more robust against both observation and transition disturbances on a series of continuous control tasks in MuJoCo. 
\end{abstract}

%大概逻辑思路：传统safeRL有两个重要的分支：优化准则（不仅仅最大化期望奖励）、实际模型可能有不确定性。CVaR是在优化准则上进行修改，我们也分析了它在实际模型有不确定性下的情况。最后，通过实验看看CVaR在adversarial disturbance下的表现

\section{Introduction}
%phenomenon success 下降 对抗噪声 风险

Deep reinforcement learning (DRL) has achieved enormous success on a variety of tasks, ranging from playing Atari games%~\cite{atari_nips,atari_nature,atari_icml}
~\cite{atari_nature} and Go~\cite{go_nature} to manipulating complex robotics in real world~\cite{kendall2019learning}. However, due to the intrinsic uncertainty of both transition and observation, these methods may result in catastrophic failures%~\cite{Heger1994,coraluppi1999risk}
~\cite{Heger1994,sandy17}, i.e., the agent may receive significantly negative outcomes. This phenomenon is attributed to several factors. One is that traditional DRL only aims at cumulative return maximization without considering the stochasticity of the environment~\cite{survey15}, which may lead to serious consequences with a certain probability and thereby expose policies to risk. This can be illustrated briefly in the case of self-driving, where the agent might try to achieve the highest reward by acting dangerously, e.g., agents may drive along the edge of a curve in order to reach the destination more quickly without considering the potential danger. Also, a random disturbance or adversarial disturbance may interfere with the agent's observation, yielding a significant performance degeneration~\cite{sandy17}.

% previous work-> safe RL
Various efforts has been made on safe reinforcement learning (safe RL) to handle transition uncertainty and observation uncertainty%~\cite{Heger1994,coraluppi1999risk,survey15}
~\cite{Heger1994,survey15,Huan2020}. ~\citeauthor{survey15} (\citeyear{survey15}) conduct a comprehensive survey on safe RL and argue that an array of methods focusing on transition uncertainty are based on transforming the optimization criterion. For example, robust approximate dynamic programming~\cite{Aviv2013}, based on a projected fixed point equation, considers how to solve robust MDPs~\cite{wiesemann2013robust} approximately to improve the robustness of the agent under transition uncertainty. Moreover, there is an array of work paying attention to observation disturbance. For example, some recent work formulates observation disturbance as a state-adversarial Markov decision process (SA-MDP) and proposes robust algorithms against observation disturbance. However, such work for handling transition uncertainty and observation uncertainty has some major drawbacks. First, due to the consideration of the worst-case outcomes~\cite{Heger1994}, those methods %\hangx{what do you mean by transformed optimization?} 
may lead to overly pessimistic policies, which will focus too much on the worst case and own poor average performance. Moreover, though the agent may suffer from transition uncertainty and observation uncertainty at the same time, existing work~\cite{Arnab2005,Aviv2013,Huan2020} can only handle observation disturbance or transition disturbance separately. The main reason is that these two kinds of disturbance are structurally different. To the best of our knowledge, there is currently no analysis on the connection of the two typical uncertainties and few methods to deal with them both at the same time.

% \hangx{why build a connection is important? what are the challenges?}

% \hangx{I think this paragraph could be organized as 1) the transition uncertainty and the limitation for the current methods 2) the observation uncertainty and the limitations 3) why the connection is important and what are the challenges }

% 我们propose什么方法 cvar是什么为什么用cvar 我们用cvar怎么做
% !!!! 我们可以做到: outperform? safeness? robustness (adversarial)? convergence?

To build a connection between transition disturbance and observation disturbance, we first prove that the performance degradation resulting from each of them is theoretically dependent on a new notion of \emph{Value Function Range} (VFR), %\hangx{is this a new metric proposed in our paper?} 
which is the value function gap between the best and worst states. However, directly controlling VFR may also suffer from the problem of excessive pessimism, because VFR only considers the value of extreme states, and moreover, the value function calculated in VFR is difficult to estimate. We first use conditional value-at-risk (CVaR) as a slack of the minimum since CVaR can be used for avoiding overly pessimistic policies~\cite{alexander2004comparison,alexander2006minimizing}. Moreover, We theoretically prove that the CVaR of the return of trajectories is a lower bound of the CVaR of the value function and explain that the former one is much easier to estimate.  %Moreover, we consider to use conditional value-at-risk (CVaR), which is a well established metric for evaluating uncertainty %\hangx{not a good explanation of economic uncertainty} %\hangx{what is the relationship between CVaR and VFR?}

Based on this theoretical analysis, %\hangx{what is the main theoretical result that can support the CVaR-based algorithm?}
we propose to use the CVaR of the return of trajectories to replace VFR and formulate a CVaR-based constrained optimization problem for safe RL under transition disturbance as well as observation disturbance. Furthermore, we analyze the properties of this optimization problem and present a new algorithm called CVaR-Proximal-Policy-Optimization (CPPO) based on Proximal Policy Optimization (PPO)~\cite{ppo}. %\hangx{what problem we are solving using the new algorithm? and what are the key idea of our algorithm?}
Empirically, we compare CPPO to multiple on-policy baselines as well as some previous CVaR-based methods on various continuous control tasks in MuJoCo~\cite{mujoco}. Our results show that CPPO achieves competitive cumulative reward in the training stage %\hangx{why higher reward in the training stage?} 
and exhibits stronger robustness when we apply perturbations to these environments. 

% \begin{figure}[h!]
% \centering
% \includegraphics[height=6.1cm,width=8.1cm]{figure/CVaR.png}
% \caption{VaR and CVaR}
% \label{CVaR_fig}
% \end{figure}

%Contribution
In summary, our contributions are: %\hangx{could be reorganized as 1) theoretical connection 2) the novel algorithm inspired by the theory 3) main results  }
\begin{itemize}
    \item We theoretically analyze the performance of trained policies under transition and observation disturbance, and use VFR to build a theoretical connection between these two types of structurally different disturbance; 
    \item Based on the analysis, we present a constrained optimization problem to maximize the cumulative reward and simultaneously control the risk, which is solved by our CPPO algorithm under the regularization of CVaR;
    \item We empirically demonstrate that CPPO exhibits stronger robustness under transition/observation perturbations compared with the alternative common on-policy RL algorithms and previous CVaR-based RL algorithms on different MuJoCo tasks. 
\end{itemize}

\section{Background}
In this section, we briefly introduce safe reinforcement learning (safe RL) and conditional value-at-risk (CVaR), which motivate us to adopt CVaR as a metric of risk in safe RL.

\subsection{Safe RL}
\label{safeRL_section}
In a standard RL setting, the agent interacts with an unknown environment and learns to achieve the highest long-term return. The task is modeled as a Markov decision process (MDP) \label{MDP} of  $\mathcal{M}=(\mathcal{S},\mathcal{A}, \mathcal{R}, \mathcal{P}, \gamma)$, where $\mathcal{S}$ and $\mathcal{A}$ represent the state space and the action space, respectively; $\mathcal{P}: \mathcal{S}\times \mathcal{A}\times \mathcal{S}\rightarrow [0,1]$ denotes the transition probability that captures the dynamics of the environment;  $\mathcal{R}: \mathcal{S}\times \mathcal{A} \rightarrow [-R_{\max},R_{\max}]$ represents the reward function; and $\gamma$ is a discount factor. We use $\pi_{\theta}$ to represent the policy of the agent with parameter $\theta$, which is a mapping from $\mathcal{S}$ to the set of distributions on $\mathcal{A}$. At each time step $t$, the agent perceives the current state $s_t\in\mathcal{S}$, chooses its action $a_t \in \mathcal{A}$ sampled from the distribution $\pi_\theta(\cdot|s_t)$ and obtains a reward $r_t$. 
%All these timesteps consist of a trajectory $\tau = (s_0, a_0, r_i, s_1, a_1,...)$.
%Given an MDP $\mathcal{M}$, the goal of RL is to find the optimal policy $\pi_{\theta^*}$ with the highest expected cumulative reward as 
% \begin{equation}
% \label{standard_problem}
% \begin{aligned}
%     \max_{\theta} J(\pi_{\theta}) \triangleq \mathbb{E}\left[D(\pi_{\theta}) \triangleq \sum_{t=1}^{\infty} \gamma^t r_{t}\Big{|}\pi_{\theta}\right],
% \end{aligned}
% \end{equation}
% where $D(\pi_{\theta})$ represents the return of the policy $\pi_{\theta}$, and $J(\pi_{\theta})$ is the expectation of $D(\pi_{\theta})$.
However, most of the current algorithms try to maximize cumulative reward without considering the risk of the policy, which may cause catastrophic results%~\cite{Heger1994,coraluppi1999risk}
~\cite{Heger1994}. 

To address this problem, an array of safe RL methods tend to change the objective in order to eliminate the uncertainty and avoid the danger~\cite{survey15}. In general, two kinds of uncertainty are widely discussed, namely, transition uncertainty and observation uncertainty. The transition uncertainty of RL denotes scenarios where the parameters of the MDP are unknown or there is a gap between the training and testing environments. Studies conducted by~\citeauthor{Arnab2005} and~\citeauthor{Aviv2013} assume that the actual transition belongs to a set $\hat{\mathcal{P}}$ and propose to optimize
\begin{equation}
\begin{aligned}
\label{parameter-uncertainty}
    \max_{\theta}\min_{\mathcal{P}\in\hat{\mathcal{P}} } J_{\mathrm{tr}}(\pi_{\theta}, \mathcal{P}) \triangleq \mathbb{E}\left[D(\pi_{\theta}) \triangleq \sum_{t=1}^{\infty} \gamma^t r_{t}\Big{|}\pi_{\theta}, \mathcal{P}\right].
\end{aligned}
\end{equation}

As for the observation uncertainty of RL, it refers to the gap between the observation and the true state. For example, the observation of the agent may be disturbed by random disturbance as well as adversarial disturbance, which will cause a drop in the performance~\cite{sandy17}. For evaluating observation uncertainty, some previous work~\cite{Huan2020} has assumed that the observation will be disturbed as $\nu(s)\in\mathcal{S}$ when the true state is $s$, and hope to find a robust policy under any $\nu\in\Gamma$, here $\Gamma$ is the set of all state-observation disturbance. Based on that assumption, such work has built a framework named state-adversarial MDP (SA-MDP) to solve
\begin{equation}
\label{observation-uncertainty}
    \max_{\theta}\min_{\nu\in\Gamma}J_{\mathrm{obs}}(\pi_{\theta}) \triangleq \mathbb{E}\left[D(\pi_{\theta}, \nu) \triangleq \sum_{t=1}^{\infty} \gamma^t r_{t} \right].
\end{equation}
However, existing safe RL methods are not problemless. First, both (\ref{parameter-uncertainty}) and (\ref{observation-uncertainty}) are \textit{max-min problems}, which do not have general effective solvers and usually have high computational complexity. Second, focusing on the worst trajectories may cause overly pessimistic behaviors. %For example, $\hat{Q}$-$Learning$ aims to improve the performance under the worst scenario, which can lead to extremely conservative actions~\cite{Heger1994}. %Finally, the direct usage of variance to evaluate risk is another potential concern because it will penalize not only the possibility of particularly bad trajectories, but also the good ones, yielding a drop in the agent's performance~\cite{szego2002measures}.
Finally, because these two kinds of uncertainty are structurally different, existing work always considers them separately rather than building a connection between them. %\hangx{I think we should also discuss the limitation in the introduction}

\subsection{CVaR}
\label{CVaR_section}
Value-at-risk (VaR) and conditional value-at-risk (CVaR) are both well-established metrics for measuring risk in economy%~\cite{alexander2004comparison,alexander2006minimizing}
~\cite{alexander2004comparison}. First, we give their definitions~\cite{CVaRnips} below.
\begin{mydef}[VaR and CVaR]
\label{CVaR}
For a bounded-mean random variable $Z$, the value-at-risk (VaR) of $Z$ with confidence level $\alpha\in(0,1)$ is defined as:
\begin{equation}
    \mathrm{VaR}_{\alpha}(Z) = \min \{z|F(z)\ge \alpha\},
\end{equation}
where $F(z) = P(Z \leq z)$ is the cumulative distribution function (CDF); 
and the conditional value-at-risk (CVaR) of $Z$ with confidence level $\alpha$ is defined as the expectation of the $\alpha$-tail distribution of $Z$ as
\begin{equation}
    \mathrm{CVaR}_{\alpha}(Z) = \mathbb{E}_{z\sim Z}\{z|z\ge \mathrm{VaR}_{\alpha}(Z)\}.  
\end{equation}
\end{mydef}
It is easy to prove that \cite{CVaRnips2}
\begin{equation}
\label{CVaR_prop1}
    \lim_{\alpha\rightarrow 1^-}\mathrm{CVaR}_{\alpha}(Z) = \max(Z).
\end{equation}

% Currently, there are some work focusing on applying CVaR to safe RL.

Previous work~\cite{CVaRnips,CVaRnips2,CVaRml} has attempted to use CVaR to analyze the risk-MDP, which considers a risk function $\mathcal{C}$ rather than a reward function $\mathcal{R}$. They propose gradient-based methods and value-based methods to optimize loss of MDP as well as keeping the CVaR under a certain value. However, these studies ignore the reward in MDP and thus cannot be directly used in RL settings. 

Besides, there are an array of work for optimizing CVaR~\cite{tamar2015optimizing,tang2020worst}, optimizing the CVaR-constrained objective~\cite{prashanth2014policy,yang2021wcsac}, and analyzing the connection between optimizing CVaR and the robustness against transition disturbance~\cite{CVaRnips2,rigter2021risk}. Also, there are some work~\cite{ma2020dsac} extending methods in distributional RL~\cite{dabney2018implicit}, which mainly considers the randomness of the return, for CVaR optimization.

% \ycy{other related work:}

% Based on CVaR and PPO, we propose CPPO and achieve models with higher overall performance and stronger robustness.

% Although variance is a common choice for measuring risk in safe RL \cite{gosavi2009reinforcement, Dotan12}, it can introduce bias. However, CVaR, by definition, can capture only the bad trajectories. Based on CVaR and proximal policy optimization, we provide CPPO to achieve models with high overall performance and stronger robustness.

\section{Theoretical Analysis}
\label{sec-theo}
In this section, we first analyze the robustness of policies against transition perturbations and observation disturbances, and further build a connection between them.

\subsection{Value Function Range}
For an MDP $\mathcal{M}$ and a given policy $\pi$, we denote its expected cumulative reward and value function as $J_{\mathcal{M}}(\pi)$ and $V_{ \mathcal{M}, \pi}$~\cite{sutton2018reinforcement}, respectively. We define the {\it Value Function Range} (VFR) to capture the gap of the value function between the best state and the worst state as follows.
\begin{mydef}[Value Function Range]
For MDP $\mathcal{M}$, the Value Function Range (VFR) of the policy $\pi$ is
\begin{equation}
    \hat{V}_{\mathcal{M}, \pi} \triangleq \max_s V_{\mathcal{M}, \pi}(s) - \min_s V_{\mathcal{M}, \pi}(s).
\end{equation}
% where $V_{\mathcal{M}, \pi}$ is the value function~\cite{sutton2018reinforcement} of policy $\pi$ in MDP $\mathcal{M}$.
\end{mydef}
Moreover, for every state $s\in \mathcal{M}$, we define its discounted future state distribution as
\begin{equation*}
    d_{\mathcal{M}}^{\pi}(s) = (1-\gamma)\sum_{t=0}^{\infty} \gamma^t \mathcal{P}(s_t=s|\pi,\mathcal{M}).
\end{equation*} 

\subsection{Performance against Transition Disturbance}
First, we consider transition disturbance. Assume that the transition $\mathcal{P}$ is disturbed by $\hat{\mathcal{P}}$, we attempt to evaluate the reduction of cumulative reward against the disturbance. We can calculate and bound the difference of performance of $\pi$ under $\mathcal{M}$ and $\hat{\mathcal{M}}$ in Theorem \ref{trans_disturb} as below:
\begin{thm}
\label{trans_disturb}
For any policy $\pi$ in MDP $\mathcal{M} = (\mathcal{S}, \mathcal{A}, \mathcal{P}, \mathcal{R}, \gamma)$ and any disturbed environment $\hat{\mathcal{M}} = (\mathcal{S}, \mathcal{A}, \hat{\mathcal{P}}, \mathcal{R}, \gamma)$, the reduction of the cumulative reward against the transition disturbance is
\begin{equation*}
\begin{aligned}
    & J_{\hat{\mathcal{M}}}(\pi) - J_{\mathcal{M}}(\pi)
    \\
    = &\frac{\gamma}{1-\gamma} \mathbb{E}_{s\sim d_{\hat{\mathcal{M}}}^{\pi}} \mathbb{E}_{a\sim\pi} \mathbb{E}_{s'\sim \hat{\mathcal{P}}} \left(1 - \frac{\mathcal{P}(s'|s, a)}{\hat{\mathcal{P}}(s'|s, a)}\right)V_{\mathcal{M}, \pi}(s'). 
\end{aligned}
\end{equation*}
Furthermore, an upper bound of the reduction is
\begin{equation}
\begin{split}
    &\left|J_{\mathcal{M}}(\pi) - J_{\hat{\mathcal{M}}}(\pi)\right|
    \\
    \leq &\frac{2\gamma}{1-\gamma} \max_{s,a} D_{\mathrm{TV}}(\mathcal{P}(\cdot|s,a), \hat{\mathcal{P}}(\cdot|s,a)) \hat{V}_{\mathcal{M}, \pi}.
\end{split}
\end{equation}
\end{thm}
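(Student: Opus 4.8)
The plan is to establish the equality by a value-difference (telescoping) argument and then obtain the bound as a routine total-variation estimate. First I would write the performance gap as an expectation of the pointwise value difference over the initial-state distribution, $J_{\hat{\mathcal{M}}}(\pi) - J_{\mathcal{M}}(\pi) = \mathbb{E}_{s_0}[\Delta(s_0)]$, where $\Delta(s) \triangleq V_{\hat{\mathcal{M}}, \pi}(s) - V_{\mathcal{M}, \pi}(s)$. Since both MDPs share the same reward $\mathcal{R}$ and the same policy $\pi$, subtracting the two Bellman evaluation equations cancels the reward term, leaving $\Delta(s) = \gamma \mathbb{E}_{a\sim\pi}[\mathbb{E}_{s'\sim\hat{\mathcal{P}}} V_{\hat{\mathcal{M}}, \pi}(s') - \mathbb{E}_{s'\sim\mathcal{P}} V_{\mathcal{M}, \pi}(s')]$. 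Adding and subtracting $\mathbb{E}_{s'\sim\hat{\mathcal{P}}} V_{\mathcal{M}, \pi}(s')$ splits this into a recursive part $\gamma\mathbb{E}_{a\sim\pi}\mathbb{E}_{s'\sim\hat{\mathcal{P}}}[\Delta(s')]$ plus a source term $g(s) \triangleq \gamma\mathbb{E}_{a\sim\pi}[\mathbb{E}_{s'\sim\hat{\mathcal{P}}} V_{\mathcal{M}, \pi}(s') - \mathbb{E}_{s'\sim\mathcal{P}} V_{\mathcal{M}, \pi}(s')]$.

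This gives the fixed-point relation $\Delta(s) = g(s) + \gamma\mathbb{E}_{a\sim\pi}\mathbb{E}_{s'\sim\hat{\mathcal{P}}}[\Delta(s')]$, which I recognize as the Bellman evaluation equation for the ``reward'' $g$ under the disturbed dynamics $\hat{\mathcal{M}}$. Unrolling it yields $\Delta(s_0) = \sum_{t\ge 0}\gamma^t \mathbb{E}[g(s_t)\mid \pi, \hat{\mathcal{M}}]$; averaging over $s_0$ and folding the discounted sum into the occupancy measure $d_{\hat{\mathcal{M}}}^{\pi}$ converts the prefactor into $\frac{1}{1-\gamma}$ and produces $\frac{\gamma}{1-\gamma}\mathbb{E}_{s\sim d_{\hat{\mathcal{M}}}^{\pi}}\mathbb{E}_{a\sim\pi}[\mathbb{E}_{s'\sim\hat{\mathcal{P}}} V_{\mathcal{M}, \pi}(s') - \mathbb{E}_{s'\sim\mathcal{P}} V_{\mathcal{M}, \pi}(s')]$. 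Finally I rewrite the second inner expectation as one against $\hat{\mathcal{P}}$ via the importance ratio, $\mathbb{E}_{s'\sim\mathcal{P}}[\,\cdot\,] = \mathbb{E}_{s'\sim\hat{\mathcal{P}}}[\frac{\mathcal{P}(s'|s,a)}{\hat{\mathcal{P}}(s'|s,a)}\,\cdot\,]$, so that the two inner expectations merge into the claimed integrand $(1 - \frac{\mathcal{P}(s'|s,a)}{\hat{\mathcal{P}}(s'|s,a)})V_{\mathcal{M}, \pi}(s')$, and a quick check confirms the sign matches the statement.

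For the upper bound I start from the equality in the form just before the importance-ratio rewriting and take absolute values. For each fixed $(s,a)$ the inner difference equals $\sum_{s'}(\hat{\mathcal{P}}(s'|s,a) - \mathcal{P}(s'|s,a))V_{\mathcal{M}, \pi}(s')$; because both are probability vectors, their coefficients sum to zero, so I may subtract the constant $\min_{s'} V_{\mathcal{M}, \pi}(s')$ from each value without changing the sum. The shifted values then lie in $[0, \hat{V}_{\mathcal{M}, \pi}]$, and bounding termwise gives $\hat{V}_{\mathcal{M}, \pi}\,\|\hat{\mathcal{P}}(\cdot|s,a) - \mathcal{P}(\cdot|s,a)\|_1 = 2\hat{V}_{\mathcal{M}, \pi}\,D_{\mathrm{TV}}(\mathcal{P}(\cdot|s,a), \hat{\mathcal{P}}(\cdot|s,a))$, where the factor $2$ arises precisely from centering at the minimum rather than the midpoint. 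Replacing each factor by its maximum over $(s,a)$ and using that the outer expectation over $d_{\hat{\mathcal{M}}}^{\pi}$ and $\pi$ integrates to one yields the stated $\frac{2\gamma}{1-\gamma}\max_{s,a} D_{\mathrm{TV}}(\mathcal{P}(\cdot|s,a), \hat{\mathcal{P}}(\cdot|s,a))\,\hat{V}_{\mathcal{M}, \pi}$.

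I expect the main obstacle to be the unrolling/occupancy-measure step: justifying the interchange of the infinite discounted sum with expectation and verifying that the reward genuinely cancels so that only the $\gamma$-weighted transition mismatch survives, scaled correctly into $\frac{\gamma}{1-\gamma}$. A secondary technical point worth flagging is that the importance-ratio rewriting tacitly requires $\hat{\mathcal{P}}(s'|s,a) > 0$ whenever $\mathcal{P}(s'|s,a) > 0$ (absolute continuity of $\mathcal{P}$ with respect to $\hat{\mathcal{P}}$), so that the ratio is well defined; the bound itself, being derived before this rewriting, needs no such assumption.
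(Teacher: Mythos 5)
Your proposal is correct and follows essentially the same route as the paper's proof: subtract the two Bellman evaluation equations to obtain a recursion for $\Delta V(s)=V_{\hat{\mathcal{M}},\pi}(s)-V_{\mathcal{M},\pi}(s)$ whose source term is the transition mismatch applied to $V_{\mathcal{M},\pi}$, resolve that recursion through the discounted occupancy measure $d_{\hat{\mathcal{M}}}^{\pi}$ (the paper does this via a flow identity for $d_{\hat{\mathcal{M}}}^{\pi}$, you by unrolling the fixed point --- the same computation), and bound the mismatch by exploiting that $\hat{\mathcal{P}}(\cdot|s,a)-\mathcal{P}(\cdot|s,a)$ sums to zero, so a constant may be subtracted from $V_{\mathcal{M},\pi}$ before estimating in total variation. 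The one noteworthy difference is the centering constant: you subtract $\min_{s'}V_{\mathcal{M},\pi}(s')$ and recover exactly the stated factor $\frac{2\gamma}{1-\gamma}$, whereas the paper subtracts the midpoint $\bar{V}_{\mathcal{M},\pi}$, for which $|V_{\mathcal{M},\pi}(s')-\bar{V}_{\mathcal{M},\pi}|\le\hat{V}_{\mathcal{M},\pi}/2$, so its argument in fact establishes the tighter constant $\frac{\gamma}{1-\gamma}$.
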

The key of the proof is to analyze the relationship of $V_{\mathcal{M}, \pi}-V_{\hat{\mathcal{M}}, \pi}$ with different states. We defer the complete proof to Appendix~\ref{thm_pf_3_4}, %\ref{thm_pf_3_4}
which resembles the proof by~\cite{kakade2002approximately}. Compared with the policy $\pi$ for a given MDP $\mathcal{M}$, the factors that mainly affect the performance of $\pi$ in disturbed environment $\hat{\mathcal{M}}$ are Total Variation (TV) distance $\max_{s,a} D_{\mathrm{TV}}(\mathcal{P}(\cdot|s,a), \hat{\mathcal{P}}(\cdot|s,a))$ and VFR $\hat{V}_{\mathcal{M}, \pi}$. The TV distance, depends on the range of transition disturbance, %and reflects the adversarial ability of the adversary\hangx{you have not mentioned adversary before. do you mean worst case disturbance?}, 
which is independent with the agent and cannot be controlled by safe RL. By contrast, VFR depends only on the value functions of $\pi$ in $\mathcal{M}$ and is an intrinsic property of the policy $\pi$. Therefore, we can improve the robustness of the policy $\pi$ under a transition disturbance policy by controlling $\hat{V}_{\mathcal{M}, \pi}$.

\subsection{Performance against Observation Disturbance}
Now, we consider the situation of observation disturbance. Similarly to the setting of SA-MDP~\cite{Huan2020}, we introduce adversary $\nu: \mathcal{S}\rightarrow \mathcal{S}$ to describe the disturbance of the state and denote the policy disturbed by adversary $\nu$ as $\hat\pi_{\nu}$, which means $\hat\pi_{\nu}(\cdot|s) = \pi(\cdot|\nu(s))$. 
Similarly to Theorem \ref{trans_disturb}, we can also show a similar result as below:
\begin{thm}
\label{state_disturb}
For any policy $\pi$ and any adversary $\nu$, the reduction of the expected cumulative reward of $\pi$ against the observation disturbance of $\nu$ is
\begin{equation*}
\begin{split}
    &J_{\mathcal{M}}(\hat\pi_{\nu}) - J_{\mathcal{M}}(\pi)\\
    =& \frac{\gamma}{1-\gamma} \mathbb{E}_{s\sim d_{\mathcal{M}}^{\hat\pi_{\nu}}} \mathbb{E}_{a\sim\pi(\cdot|\nu(s))} \left(1- \frac{\pi(a|s)}{\pi(a|\nu(s))}\right)\mathbb{E}_{s'\sim \mathcal{P}} V_{\mathcal{M}, \pi}(s')\\
    +& \frac{1}{1-\gamma}\mathbb{E}_{s\sim d_{\mathcal{M}}^{\hat\pi_{\nu}}} \mathbb{E}_{a\sim\pi(\cdot|\nu(s))} \left(1- \frac{\pi(a|s)}{\pi(a|\nu(s))}\right)\mathcal{R}(s, a).
\end{split}
\end{equation*}
Furthermore, an upper bound of the reduction is
\begin{equation}
\label{state_bound_eq}
\begin{split}
     &|J_{\mathcal{M}}(\pi) - J_{\mathcal{M}}(\hat\pi_{\nu})| \\
     \leq &\frac{\gamma}{1-\gamma} \max_s D_{\mathrm{TV}}(\pi(\cdot|s), \pi(\cdot|\nu(s))) \hat{V}_{\mathcal{M}, \pi}\\
    +& \frac{2}{1-\gamma}\max_s D_{\mathrm{TV}}(\pi(\cdot|s), \pi(\cdot|\nu(s)) \max_{s,a}|\mathcal{R}(s, a)|.
\end{split}
\end{equation}
\end{thm}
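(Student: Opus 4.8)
The plan is to follow the performance difference lemma of~\cite{kakade2002approximately}, treating the disturbed policy $\hat\pi_{\nu}$ as the ``new'' policy and $\pi$ as the baseline. First I would invoke the identity
\begin{equation*}
J_{\mathcal{M}}(\hat\pi_{\nu}) - J_{\mathcal{M}}(\pi) = \frac{1}{1-\gamma}\mathbb{E}_{s\sim d_{\mathcal{M}}^{\hat\pi_{\nu}}}\mathbb{E}_{a\sim\hat\pi_{\nu}(\cdot|s)}\left[A_{\mathcal{M},\pi}(s,a)\right],
\end{equation*}
where $A_{\mathcal{M},\pi}(s,a)=Q_{\mathcal{M},\pi}(s,a)-V_{\mathcal{M},\pi}(s)$ is the advantage of $\pi$ and $Q_{\mathcal{M},\pi}(s,a)=\mathcal{R}(s,a)+\gamma\,\mathbb{E}_{s'\sim\mathcal{P}}V_{\mathcal{M},\pi}(s')$. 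Because $\hat\pi_{\nu}(\cdot|s)=\pi(\cdot|\nu(s))$, the inner action expectation is taken under $\pi(\cdot|\nu(s))$; the normalization $(1-\gamma)$ in the definition of $d_{\mathcal{M}}^{\pi}$ is what produces the $\tfrac{1}{1-\gamma}$ prefactor.

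The key observation, which I expect to carry the equality, is that $V_{\mathcal{M},\pi}(s)=\mathbb{E}_{a\sim\pi(\cdot|s)}Q_{\mathcal{M},\pi}(s,a)$, so $\mathbb{E}_{a\sim\pi(\cdot|\nu(s))}A_{\mathcal{M},\pi}(s,a)$ is a \emph{difference} of the same $Q$-function integrated against two different action laws, $\pi(\cdot|\nu(s))$ and $\pi(\cdot|s)$. Rewriting the $\pi(\cdot|s)$-expectation as an importance-weighted $\pi(\cdot|\nu(s))$-expectation produces the factor $\left(1-\tfrac{\pi(a|s)}{\pi(a|\nu(s))}\right)$, and substituting the Bellman expansion of $Q_{\mathcal{M},\pi}$ cleanly splits the result into the reward term (prefactor $\tfrac{1}{1-\gamma}$) and the value term (prefactor $\tfrac{\gamma}{1-\gamma}$), which is exactly the claimed identity.

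For the upper bound I would treat the two terms separately, rewriting each importance-weighted expectation as $\sum_a\bigl(\pi(a|\nu(s))-\pi(a|s)\bigr)g(s,a)$. The crucial trick for the value term is that these coefficients sum to zero, so I may subtract any state-dependent constant from $g$ without altering the sum; choosing the midpoint $\tfrac12\bigl(\max_{s'}V_{\mathcal{M},\pi}(s')+\min_{s'}V_{\mathcal{M},\pi}(s')\bigr)$ bounds $|g(s,a)-c|$ by $\tfrac12\hat V_{\mathcal{M},\pi}$, and combined with $\sum_a|\pi(a|\nu(s))-\pi(a|s)|=2D_{\mathrm{TV}}(\pi(\cdot|s),\pi(\cdot|\nu(s)))$ this converts a naive $\max|V|$ estimate into the sharper VFR $\hat V_{\mathcal{M},\pi}$, recovering the $\tfrac{\gamma}{1-\gamma}$ coefficient. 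For the reward term I would bound $|\mathcal{R}(s,a)|\le\max_{s,a}|\mathcal{R}(s,a)|$ directly, giving the factor $2$. Taking the maximum over $s$, the expectation over $d_{\mathcal{M}}^{\hat\pi_{\nu}}$, and then combining via the triangle inequality yields the stated bound.

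The main obstacle will be the second step: bookkeeping the two distinct action distributions so that the importance ratio and the reward/value decomposition emerge without sign errors. The secondary subtlety is the zero-sum subtraction in the bound, since it is precisely this manipulation that replaces $\max|V|$ by the value-function range $\hat V_{\mathcal{M},\pi}$ and lets VFR play the same structural role it does in Theorem~\ref{trans_disturb}.
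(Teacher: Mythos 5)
Your proposal is correct and follows essentially the same route as the paper: the paper re-derives the performance-difference identity inline (Bellman equations for $V_{\mathcal{M},\pi}$ and $V_{\mathcal{M},\hat\pi_{\nu}}$ combined with a flow lemma for $d_{\mathcal{M}}^{\hat\pi_{\nu}}$) rather than citing it, but the resulting structure --- the importance ratio $\bigl(1-\tfrac{\pi(a|s)}{\pi(a|\nu(s))}\bigr)$, the split of $Q_{\mathcal{M},\pi}$ into reward and discounted-value terms, and the zero-sum/midpoint trick that upgrades $\max_s|V_{\mathcal{M},\pi}(s)|$ to the VFR $\hat{V}_{\mathcal{M},\pi}$ together with $\sum_a|\pi(a|\nu(s))-\pi(a|s)|=2D_{\mathrm{TV}}$ --- is identical to yours. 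The only difference is modularity: you invoke the lemma of~\cite{kakade2002approximately} as a black box, which is precisely the argument the paper acknowledges its proof resembles.
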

The proof is similar to that of Theorem \ref{trans_disturb} and is also included in Appendix~\ref{thm_pf_3_4}. %\ref{thm_pf_3_4}
Moreover, for the upper bound, Theorem \ref{state_disturb} provides a bound that is structurally homologous to, but tighter than, the bound provided in~\cite{Huan2020}. This is because our VFR can be bounded by $\max_{s,a}|R(s, a)|$, which is also proven in Appendix~\ref{thm_pf_3_4}. %\ref{thm_pf_3_4}.
Similarly, compared with the policy $\pi$ for the given MDP $\mathcal{M}$, the factors mainly affecting the performance of the disturbed policy $\pi_{\nu}$ are TV distance $\max_s D_{\mathrm{TV}}(\pi(\cdot|s), \pi(\cdot|\nu(s))$ and the VFR $\hat{V}_{\mathcal{M}, \pi}$. The TV distance, depends on the policy $\pi$ as well as the disturbance $\nu$, reflecting both the robustness of the policy $\pi$ and the adversarial ability. However, independent of the adversary, the latter factor (the VFR of the policy), only depends on the value functions of $\pi$ in $\mathcal{M}$, reflecting the robustness of the policy $\pi$. Thus, we can also improve the robustness under observation disturbance of the policy by controlling the VFR of the policy.

\subsection{Connection between the Transition and Observation Disturbance}
Transition disturbance and observation disturbance are structurally different, as they affect MDP and the observation of the policy respectively. Although existing literature usually considers them separately, by Theorem~\ref{trans_disturb} and Theorem~\ref{state_disturb}, we can find out that their effects on cumulative reward are similar; the similarity depends on the VFR $\hat{V}_{\mathcal{M}, \pi}$, which is an inherent property of $\pi$ and independent of the adversary. Theoretically, if we set $\epsilon_{\mathcal{P}} = \max_{s,a}D_{\mathrm{TV}}(\mathcal{P}(\cdot|s,a), \hat{\mathcal{P}}(\cdot|s,a))$, $\epsilon_{\pi} = \max_s D_{\mathrm{TV}}(\pi(\cdot|s), \pi(\cdot|\nu(s))$ and assume that $\max_{s,a}|\mathcal{R}(s,a)|=1$, we can naturally deduce
\begin{equation}
\begin{split}
    &\left|J_{\mathcal{M}}(\pi) - J_{\hat{\mathcal{M}}}(\pi)\right|
    \leq \frac{2\gamma}{1-\gamma} \epsilon_{\mathcal{P}} \hat{V}_{\mathcal{M}, \pi}\\
    &|J_{\mathcal{M}}(\pi) - J_{\mathcal{M}}(\hat\pi_{\nu})| 
     \leq \frac{\gamma}{1-\gamma} \epsilon_{\pi} \hat{V}_{\mathcal{M}, \pi} + \frac{2}{1-\gamma}\epsilon_{\pi}.
\end{split}
\end{equation}
Thus we can improve the robustness of the policy under observation and transition disturbances by controlling $\hat{V}_{\mathcal{M}, \pi}$.

\section{Methodology}
\label{sec-method}
In this section, we first formulate our problem for improving the robustness of the agent and then propose a novel on-policy algorithm of CPPO to solve it.

% We now present our method that maximizes the expected reward while restricting the risk of the policy. We focus on increasing the agent's performance on relatively worse trajectories, which loosens the \textit{max-min problem} to an constrained optimization problem. Moreover, we can make our policy less conservative by modifying the parameter $\alpha$ in CVaR. Compared with variance, CVaR is a better metric for measuring risk, because it, by definition, captures only bad trajectories. %The detailed discussion is in sec. \ref{sec-3}.

%Moreover, we'll optimize the objective with gradient-based methods and propose our algorithm.
%\dong{this sentence is a little bit wordy.}

\subsection{Problem Formulation}
\label{sec-3-1}
%safeRL robustRL 定义
We first discuss the connection between controlling the VFR $\hat{V}_{\mathcal{M}, \pi}$ and CVaR-based RL. For controlling $\hat{V}_{\mathcal{M}, \pi}$, it is more reasonable to maximize $\min_s V_{\mathcal{M}, \pi}(s)$ rather than minimize $\max_s V_{\mathcal{M}, \pi}(s)$ since the latter one contradicts our goal of maximizing cumulative expected return. %\hangx{why?}. 
However, as mentioned in Sec.~\ref{safeRL_section}, directly maximizing the value function of the worst state may cause our policy to be overly conservative. By the property (\ref{CVaR_prop1}) of CVaR, it is more reasonable to loosen $\min_s V_{\mathcal{M}, \pi}(s)$ to $-\mathrm{CVaR}_{\alpha}(-V(s))$ where $s\sim\mu(\cdot)$ obeys the initial distribution of the environment. Unfortunately, we cannot precisely approximate the value function of every state in practice. Since the return of every trajectory can be calculated exactly, we consider loosening $-\mathrm{CVaR}_{\alpha}(-V(s))$ by $-\mathrm{CVaR}_{\alpha}(-D(\pi))$ via Theorem~\ref{thm-5}.
\begin{thm}[Proof in Appendix~\ref{thm_pf_5}]
\label{thm-5}
For any $\alpha\in[0,1]$, we have
\begin{equation}
    -\mathrm{CVaR}_{\alpha}(-D(\pi))\leq -\mathrm{CVaR}_{\alpha}(-V(s)).
\end{equation}
\end{thm}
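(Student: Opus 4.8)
The plan is to reduce the statement to a general ``conditioning reduces tail risk'' property of CVaR. First I would negate both sides: the claim $-\mathrm{CVaR}_{\alpha}(-D(\pi)) \le -\mathrm{CVaR}_{\alpha}(-V(s))$ is equivalent to $\mathrm{CVaR}_{\alpha}(-V(s)) \le \mathrm{CVaR}_{\alpha}(-D(\pi))$. The key structural observation is that the value function is exactly the conditional expectation of the trajectory return given the initial state: with $s_0\sim\mu$ and the full trajectory randomness, $V(s) = \mathbb{E}[D(\pi)\mid s_0=s]$, so writing $Z = -D(\pi)$ we have $\mathbb{E}[Z\mid s_0] = -V(s_0)$. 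Thus the inequality is precisely $\mathrm{CVaR}_{\alpha}(\mathbb{E}[Z\mid s_0]) \le \mathrm{CVaR}_{\alpha}(Z)$, i.e.\ passing from $-D$ to its state-conditional average $-V$ can only shrink the upper tail.

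The main tool I would use is the Rockafellar--Uryasev variational formula for the upper-tail CVaR,
\begin{equation*}
\mathrm{CVaR}_{\alpha}(Z) = \min_{\eta\in\mathbb{R}} \left\{ \eta + \frac{1}{1-\alpha}\,\mathbb{E}\big[(Z-\eta)^{+}\big] \right\},
\end{equation*}
which is consistent with the convention in Definition~\ref{CVaR} and with the limit (\ref{CVaR_prop1}). Fixing $\eta$ and applying the conditional Jensen inequality to the convex map $x\mapsto (x-\eta)^{+}$ gives $(\mathbb{E}[Z\mid s_0]-\eta)^{+} = (\mathbb{E}[\,Z-\eta\mid s_0\,])^{+} \le \mathbb{E}[(Z-\eta)^{+}\mid s_0]$; taking expectations and using the tower property yields $\mathbb{E}[(\mathbb{E}[Z\mid s_0]-\eta)^{+}] \le \mathbb{E}[(Z-\eta)^{+}]$ for every $\eta$.

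Consequently the two Rockafellar--Uryasev objectives are ordered pointwise in $\eta$, so their minima obey $\mathrm{CVaR}_{\alpha}(\mathbb{E}[Z\mid s_0]) \le \mathrm{CVaR}_{\alpha}(Z)$, i.e.\ $\mathrm{CVaR}_{\alpha}(-V(s)) \le \mathrm{CVaR}_{\alpha}(-D(\pi))$; negating recovers the theorem. The boundary cases are immediate: at $\alpha=0$ both sides reduce to $\mathbb{E}[-D(\pi)] = \mathbb{E}[-V(s)]$ by the tower property, giving equality, while at $\alpha=1$ the property (\ref{CVaR_prop1}) turns the claim into $\min_s V(s)\ge \min D(\pi)$, which holds because each $V(s)$ is an average of returns and hence at least their minimum.

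I expect the only real subtlety to be the bookkeeping, not the analysis: one must identify $V$ as the state-conditional expectation of $D$ (so that $-V$ is a coarsening of $-D$) and then choose the variational representation of CVaR under which the inequality collapses to a single application of conditional Jensen. Once the conditioning structure is made explicit, every remaining step is routine. An alternative is the dual representation $\mathrm{CVaR}_{\alpha}(Z) = \sup_{Q}\mathbb{E}_{Q}[Z]$ over densities bounded by $1/(1-\alpha)$, but tracking how the admissible change of measure interacts with conditioning is more delicate than the primal Jensen argument, so I would keep to the latter.
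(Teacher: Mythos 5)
Your proof is correct, but it takes a genuinely different route from the paper's. Both arguments rest on the same structural fact---$V(s_0)=\mathbb{E}[D(\pi)\mid s_0]$, so that $-V$ is a conditional-expectation coarsening of $-D$---but the key mechanism differs. The paper's proof in Appendix~\ref{thm_pf_5} works directly with the tail-expectation definition: it rewrites $-\mathrm{CVaR}_{\alpha}(-V(s))$ as $\mathbb{E}_{\tau}[D(\tau)\mid \tau\in A]$ with $A=\{\tau: V(s_0)\le -\mathrm{VaR}_{\alpha}(-V(s))\}$, compares this with $B=\{\tau: D(\tau)\le -\mathrm{VaR}_{\alpha}(-D(\tau))\}$, argues $P(A)=P(B)$, and concludes by a rearrangement-type observation: $B$ collects exactly the worst trajectories by return, so among all events of that probability it minimizes the conditional mean of $D$, whence $\mathbb{E}[D\mid A]\ge\mathbb{E}[D\mid B]$. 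You instead pass through the Rockafellar--Uryasev variational formula (which the paper itself invokes in Appendix~\ref{proof_deform_problem}, so your convention is consistent with theirs) and a single application of conditional Jensen to the convex hinge $x\mapsto(x-\eta)^{+}$, followed by the tower property and pointwise comparison of the minimands. What your route buys: it is shorter, it sidesteps the paper's claim that $P\left(Z\le -\mathrm{VaR}_{\alpha}(-Z)\right)$ is the same for every random variable $Z$---a step that tacitly assumes the distribution functions have no atoms at the relevant quantiles---and it actually establishes the more general statement that CVaR can only decrease under any conditional-expectation coarsening, not merely conditioning on the initial state. What the paper's route buys: it is elementary and self-contained (nothing beyond Definition~\ref{CVaR} is used within the proof itself), and it makes the intuition explicit that the worst $(1-\alpha)$-mass of trajectories selected by realized return is worse than the trajectories launched from the worst $(1-\alpha)$-mass of initial states. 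One caveat applies to both proofs equally: identifying the variational minimum (your side) or the equal-probability step (the paper's side) with the paper's tail-conditional-expectation definition of CVaR requires continuity of the relevant CDFs at their quantiles, a regularity assumption the paper leaves implicit throughout.
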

% The proof of Theorem \ref{thm-5} is shown in Appendix B.2. %\ref{thm_pf_5}. 
Therefore, we consider constraining $-\mathrm{CVaR}_{\alpha}(-D(\pi))$ to improve the VFR of the policy and further improve the robustness of the policy against observation disturbance as well as transition disturbance. Based on this analysis, we define our constrained optimization problem as 
\begin{equation}
\begin{aligned}
\label{raw_problem}
&\max_{\theta} J(\pi_{\theta})\quad s.t.-\mathrm{CVaR}_{\alpha}(-D(\pi_{\theta}))\ge \beta ,
\end{aligned}
\end{equation}
where $\alpha,\beta$ are hyper-parameters. 

We denote the best policy of problem (\ref{raw_problem}) as $\pi_c(\alpha,\beta)$. 
%Now we first discuss some properties of $\pi_c(\alpha,\beta)$. 
Compared with the best policy $\pi_s$ of the standard RL problem, we obviously have $J(\pi_c(\alpha,\beta)) \leq J(\pi_s)$ since $\pi_c(\alpha,\beta)$ is in a restricted region related to hyper-parameters $\alpha,\beta$. We can further give a lower bound of $J(\pi_c(\alpha,\beta))$ as follows:
\begin{thm}[Proof in Appendix~\ref{proof_lower_bound}]
\label{lower_bound}
Assume that the discounted return of every trajectory $\tau = (S_0,A_0,R_0,S_1,...)$ can be bounded by a constant $M$, i.e.,
$
    \sum_{t=0}^{\infty}\gamma^tR_t\leq M,
$
then we have
\begin{equation*}
    J(\pi_c(\alpha,\beta)) \ge \frac{J(\pi_s) - \alpha M}{1-\alpha}.
\end{equation*}
\end{thm}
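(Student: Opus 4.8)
The plan is to prove the bound by sandwiching the constraint functional $-\mathrm{CVaR}_{\alpha}(-D(\pi))$ between a rescaled copy of $J(\pi)$ from below and $J(\pi)$ itself from above, and then to argue by cases according to whether the unconstrained optimum $\pi_s$ already satisfies the CVaR constraint. Throughout I write $D = D(\pi) = \sum_{t}\gamma^t R_t$, so that $J(\pi) = \mathbb{E}[D]$ and the trajectory bound reads $D \le M$ almost surely.

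First I would set up the tail decomposition of the mean. Letting $\mathcal{L}$ denote the lower $(1-\alpha)$-tail event $\{-D \ge \mathrm{VaR}_{\alpha}(-D)\}$ and $\mathcal{U}$ its complement (the upper $\alpha$-tail), the definition of CVaR gives $-\mathrm{CVaR}_{\alpha}(-D) = \mathbb{E}[D\mid\mathcal{L}]$, and the law of total expectation yields
\begin{equation*}
    J(\pi) = \mathbb{E}[D] = (1-\alpha)\,\mathbb{E}[D\mid\mathcal{L}] + \alpha\,\mathbb{E}[D\mid\mathcal{U}].
\end{equation*}
Two facts then drop out. Bounding the upper tail by $\mathbb{E}[D\mid\mathcal{U}] \le M$ and rearranging gives the key inequality, valid for every $\pi$,
\begin{equation*}
    -\mathrm{CVaR}_{\alpha}(-D(\pi)) \ \ge\ \frac{J(\pi) - \alpha M}{1-\alpha}. \tag{B}
\end{equation*}
Using instead $\mathbb{E}[D\mid\mathcal{U}] \ge \mathbb{E}[D\mid\mathcal{L}]$ — every realization in the upper tail dominates every realization in the lower tail — gives the complementary bound
\begin{equation*}
    -\mathrm{CVaR}_{\alpha}(-D(\pi)) \ \le\ J(\pi). \tag{A}
\end{equation*}

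Next I would combine these with $\beta_0 \triangleq \frac{J(\pi_s) - \alpha M}{1-\alpha}$ and split on $\beta$. If $\beta \le \beta_0$, then applying (B) to $\pi_s$ gives $-\mathrm{CVaR}_{\alpha}(-D(\pi_s)) \ge \beta_0 \ge \beta$, so $\pi_s$ is feasible for (\ref{raw_problem}); together with the noted monotonicity $J(\pi_c) \le J(\pi_s)$ and optimality of $\pi_c$ among feasible policies this forces $J(\pi_c) = J(\pi_s)$, and since $J(\pi_s) = \mathbb{E}[D] \le M$ one checks directly that $J(\pi_s) \ge \beta_0$, hence $J(\pi_c) \ge \beta_0$. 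If instead $\beta > \beta_0$, then feasibility of $\pi_c$ means $-\mathrm{CVaR}_{\alpha}(-D(\pi_c)) \ge \beta$, and applying (A) to $\pi_c$ yields $J(\pi_c) \ge -\mathrm{CVaR}_{\alpha}(-D(\pi_c)) \ge \beta > \beta_0$. Either way $J(\pi_c) \ge \beta_0$, which is the claim.

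The main obstacle I anticipate is inequality (B), and specifically pinning down the tail decomposition with the correct mass $1-\alpha$ on $\mathcal{L}$: the paper's conditional-expectation definition of CVaR implicitly assumes the tail event has probability exactly $1-\alpha$, which holds for continuous $F$ but at atoms of the distribution requires the standard split-atom (convex-combination) refinement of $\mathrm{CVaR}$ to keep the decomposition exact. The remaining ingredients are routine: (A) is just the elementary fact that a mean dominates the average over its own lower tail, and the case analysis uses only $J(\pi_c) \le J(\pi_s)$, already observed in the text, together with $J(\pi_s) \le M$.
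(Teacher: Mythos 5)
Your proof is correct and follows essentially the same route as the paper's: both rest on the tail decomposition $J(\pi) = (1-\alpha)\,\mathbb{E}[D\mid\mathcal{L}] + \alpha\,\mathbb{E}[D\mid\mathcal{U}]$, bound the upper tail by $M$ (your inequality (B), which is exactly the paper's estimate $J(\pi_s)\leq -\mathrm{CVaR}_{\alpha}(-D(\pi_s))(1-\alpha)+M\alpha$), and use that the mean dominates the lower-tail average (your inequality (A), which is how the paper gets $J(\pi_c(\alpha,\beta))\ge\beta$ from feasibility). The only difference is bookkeeping --- you split on $\beta\le\beta_0$ versus $\beta>\beta_0$ while the paper splits on whether $\pi_s$ itself satisfies the CVaR constraint, which yields the same two sub-arguments; your explicit caveat about atoms of the return distribution (where the exact-mass tail decomposition needs the split-atom refinement of CVaR) is a point the paper's own proof silently assumes.
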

%The key to the proof is to consider whether $\pi_s$ satisfies our constraint. 
%The detailed proof of Theorem \ref{lower_bound} can be found in Appendix B.3. %\ref{proof_lower_bound}.
By Theorem~\ref{lower_bound}, we can see that the expected cumulative return of $\pi_c(\alpha,\beta)$ will be no worse than the lower bound although it is optimized in a restricted region.

\begin{figure*}[t]
\centering
\includegraphics[height=4.2cm,width=16cm]{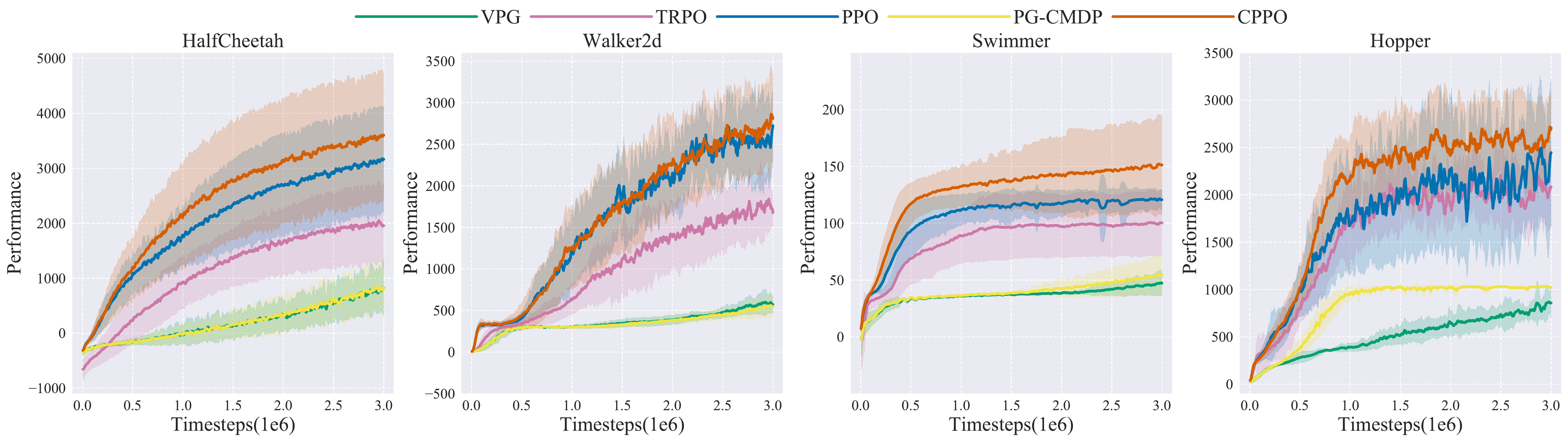}
\caption{Cumulative reward curves for VPG, TRPO, PPO, PG-CMDP and our CPPO. The x-axes indicate the number of steps interacting with the environment, and the y-axes indicate the performance of the agent, including average rewards with standard deviations.}
\label{fig_performance}
% \vspace{-1em}
\end{figure*}

\begin{table*}[tbp]
\centering
\footnotesize
% \begin{tabular}{|c|c|c|c|c|}
% \begin{tabular}{|c|ccccc|}
\begin{tabular}{cccccc}
\toprule
% \diagbox{Method}{Env}&Ant-v3&HalfCheetah-v3&Walker2d-v3&Swimmer-v3&Hopper-v3\\ %添加斜线表头
Method&Ant-v3&HalfCheetah-v3&Walker2d-v3&Swimmer-v3&Hopper-v3\\ %添加斜线表头
% Ant&Walker&Swimmer&HalfCheetah
% \multirow{4}*{Ant}
\midrule
VPG 
&12.8$\pm$ 0.0
&896.9$\pm$ 531.1			
&628.6$\pm$ 229.4
&48.3$\pm$ 11.3
&888.4$\pm$ 209.5\\
% \hline
TRPO
&1625.4$\pm$ 356.4		
&2073.8$\pm$ 741.3
&2005.6$\pm$ 398.7
&101.2$\pm$ 29.3
&2391.4$\pm$ 455.3\\
PPO
&3372.2$\pm$ 301.4		
&3245.4$\pm$ 947.3
&2946.3$\pm$ 944.3
&122.0$\pm$ 7.9
&2726.0$\pm$ 886.0\\
PG-CMDP
&7.4 $\pm$ 3.6		
&928.7$\pm$ 562.9
&596.7$\pm$ 219.9
&55.4$\pm$ 18.8
&1039.2$\pm$ 21.1\\
% \hline
CPPO(ours) 
&\textbf{3514.7$\pm$ 247.2} 
&\textbf{3680.5$\pm$ 1121.3} &\textbf{3194.0$\pm$ 648.2} 
&\textbf{182.5$\pm$ 46.0} 
&\textbf{3144.6$\pm$ 158.4}\\
\bottomrule
\end{tabular}
\caption{Cumulative reward (mean $\pm$ one std) of best policy trained by VPG, TRPO, PPO, PG-CMDP and CPPO in different MuJoCo games. In each column, we \textbf{bold} the best performance over all algorithms.}
\label{performance_table}
% \vspace{-0.5em}
\end{table*}

\begin{algorithm}[t]
    \caption{CVaR Proximal Policy Optimization (CPPO)}  
    \label{CPPO}
    \begin{algorithmic}[1] %每行显示行号  
        \REQUIRE confidence level $\alpha$, learning rate $lr_{\eta}, lr_{\theta}, lr_{\lambda}, lr_{\phi}$
        \ENSURE parameterized policy $\pi_{\theta}$ and parameterized value function $V_{\phi}$.
        \FOR{$k = 1,2,..., N_{iter}$}
        \STATE 
        Generate $N$ trajectories with the current policy $\pi_{\theta}$.
        \STATE
        Compute advantage estimates $\hat A_i^t$ of each state $s_{i,t}$ in each trajectory $\xi_i$ and the cumulative reward $D(\xi_i)$.
        % \STATE
        % Compute advantage estimates $\hat A_i^t$ of each state $s_{i,t}$ in each trajectory $\xi_i$.
        \STATE 
        Update parameters $\eta,\theta,\lambda,\phi$ respectively with the calculated gradients.
        % \begin{flalign*} 
        % \eta \gets \eta -& lr_{\eta} \left(-\lambda + \frac{\lambda}{N(1-\alpha)} \sum_{i=1}^N \textbf{1}\{\eta\ge D(\xi_i)\})\right)
        % & \nonumber\\ 
        % \theta \gets \theta -&lr_{\theta}\frac{1}{N}\sum_{i=1}^N(\nabla_{\theta}\log P_{\theta}(\xi_i))\frac{\lambda}{1-\alpha}(-D(\xi_i)+\eta)^+\\
        % + lr_{\theta} &\frac{1}{NT}\sum_{i=1}^N\sum_{t=0}^T \nabla_{\theta}\min \left( \frac{\pi_{\theta}(a_i^t|s_i^t)}{\pi_{\theta_k}(a_i^t|s_i^t)} \hat A_i^t, g(\epsilon, \hat A_i^t) \right)
        % & \nonumber \\ 
        % \lambda \gets \lambda +& lr_{\lambda} \left(-\eta+\frac{\sum_{i=1}^N (-D(\xi_i)+\eta)^+}{N(1-\alpha)}+ \beta\right)
        % & \nonumber \\ 
        % \phi \gets \phi +& lr_{\phi}\left(\sum_{i=1}^N\sum_{t=0}^T\frac{ 2(V_{\phi}(s_{i,t}) - \hat R_i^t)\nabla_{\phi}V_{\phi}(s_{i,t})}{NT}\right)
        % & \nonumber 
        % \end{flalign*}
        \STATE Modify $\beta$ as a function of current trajectories' return.
        % \begin{flalign*}
        % \beta\leftarrow g(\xi_1, \xi_2, ..., \xi_N)
        % \end{flalign*}
        \ENDFOR
    \end{algorithmic}  
\end{algorithm} 

\subsection{Optimization and Algorithm}
We now simplify the constrained problem (\ref{raw_problem}) to an unconstrained one. First, with the properties of CVaR, we can equivalently reformulate problem (\ref{raw_problem}) as %\junz{too many theorems: it's better to put the problem reformulation into plain text; and put some "small" results as "Proposition" or "Corollary", while leaving the main/key results as theorems.}
\begin{equation*}
\label{deform_problem}
\begin{split}
& \min_{\theta,\eta} -J(\pi_{\theta})\quad s.t.\;  \frac{1}{1-\alpha}\mathbb{E}[(\eta-D(\pi_{\theta}))^+] -\eta\leq -\beta.
\end{split}
\end{equation*}
The deviation is provided in Appendix~\ref{proof_deform_problem}.  %\ref{proof_deform_problem}. 
%However, problem (\ref{deform_problem}) is still a constrained optimization problem. 
Moreover, by using a Lagrangian relaxation method~\cite{nonlinear},
we need to solve the saddle point of the function $L(\theta,\eta,\lambda)$ as
\begin{equation}
\begin{split}
\label{optimization_problem}
&\max_{\lambda\ge 0}\min_{\theta,\eta}  L(\theta,\eta,\lambda) \\
\triangleq &-J(\pi_{\theta})+ \lambda \left(\frac{1}{1-\alpha}\mathbb{E}[(\eta-D(\pi_{\theta}))^+] -\eta + \beta\right).
\end{split}
\end{equation}
To solve problem (\ref{optimization_problem}), we extend Proximal Policy Optimization (PPO)~\cite{ppo} with CVaR and name our algorithm CVaR Proximal Policy Optimization~(CPPO). In particular, the key point is to calculate gradients~\cite{PG}. Here, we use methods in~\cite{CVaRnips} to compute the gradient of our objective function (\ref{optimization_problem}) with respect to $\eta, \theta,\lambda$ as
\begin{equation*}
\nabla_{\eta}L(\theta,\eta,\lambda) =  \frac{\lambda}{1-\alpha}\mathbb{E}_{\xi\sim\pi_{\theta}}\textbf{1}\{\eta\ge D(\xi)\}) -\lambda
\end{equation*}
\begin{equation*}
\begin{split}
&\nabla_{\theta}L(\theta,\eta,\lambda)\\
=& -\mathbb{E}_{\xi\sim\pi_{\theta}}(\nabla_{\theta}\log P_{\theta}(\xi))\left(D(\xi) 
-\frac{\lambda}{1-\alpha}(-D(\xi)+\eta)^+ \right)
\end{split}
\end{equation*}
\begin{equation*}
\nabla_{\lambda}L(\theta,\eta,\lambda) =  \frac{1}{1-\alpha}\mathbb{E}_{\xi\sim\pi_{\theta}} (-D(\xi)+\eta)^++ \beta -\eta.
\end{equation*}
%The key to the deviation is to deform the objective in problem~(\ref{optimization_problem}) as the integration of trajectories. 
The detailed calculation is in Appendix B.5. %\ref{proof_gradient}.
Moreover, with the improvement of policies' performance during training, it is unreasonable to fix $\beta$ to constrain the risk of the policy. Thus, we consider modifying $\beta$ as a function of the risk of trajectories in the current epoch. For example, in CPPO, we set $\beta$ as the mean value of the expected cumulative return of the worst $K$ trajectories of all $N$ trajectories in the previous epoch, and we will set the ratio $K/N$ to be larger than the ratio in the constraint for reducing the risk. %\hangx{good, but how to adjust $\beta$?}
% Based on PPO and the algorithm by~\cite{CVaRnips}, we can use the gradient given above to develop an on-policy algorithm called CPPO (see 
Algorithm \ref{CPPO} outlines the CPPO algorithm, and a more detailed version is in Appendix~\ref{pseudecode}.
%\hangx{maybe we could include a brief algorithm in the main text.}
%\ref{pseudecode}).

\begin{figure*}[t]
\centering
\includegraphics[height=4.2cm,width=16cm]{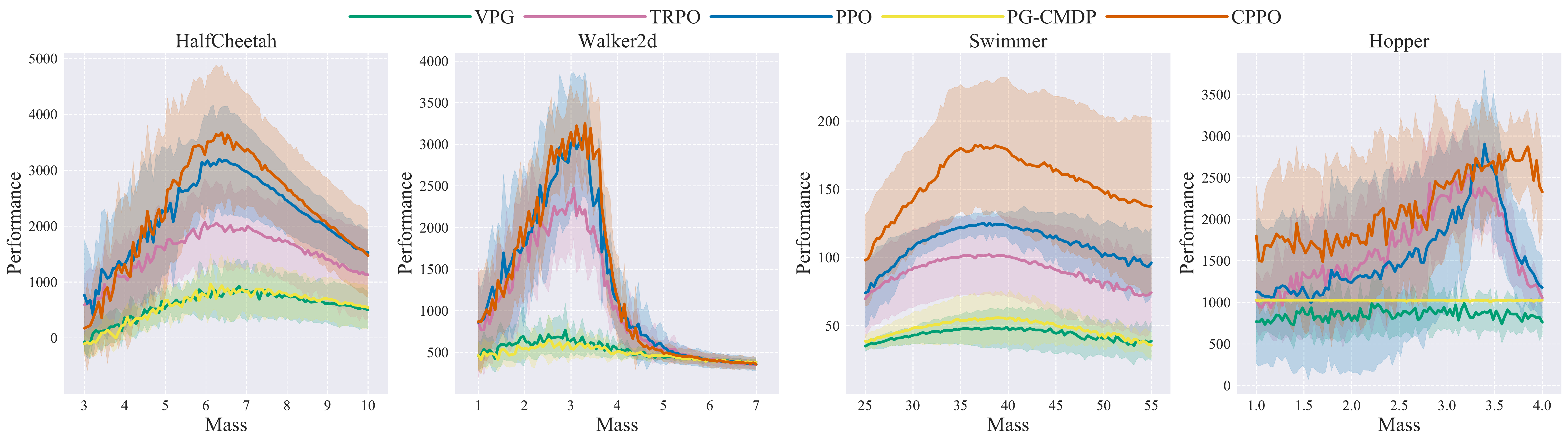}
\caption{Cumulative reward curves for VPG, TRPO, PPO, PG-CMDP and our CPPO under transition disturbance. The x-axes indicate the mass of the agent, and the y-axes indicate the average performance of the algorithm when the mass changes.}
\label{fig_mass}
% \vspace{-1em}
\end{figure*}

\begin{figure*}[t]
\centering
\includegraphics[height=4.2cm,width=16cm]{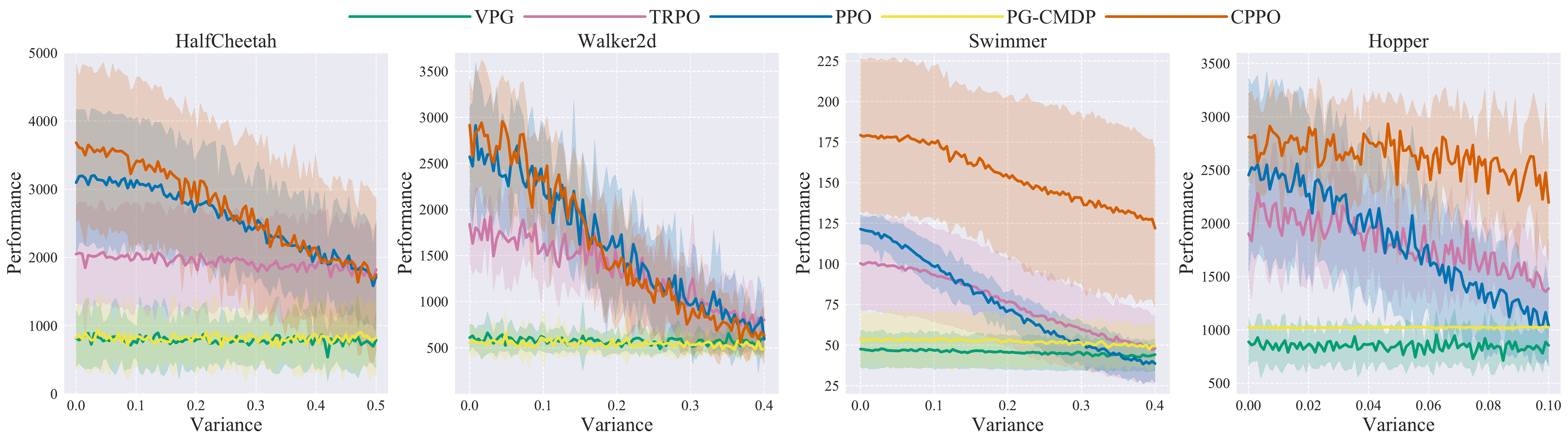}
\caption{Cumulative reward curves for VPG, TRPO, PPO, PG-CMDP and our CPPO under observation disturbance. The x-axes indicate the range of the disturbance, and the y-axes indicate the average performance of the algorithm under the state disturbance.}
\label{fig_ob}
% \vspace{-0.5em}
\end{figure*}

\section{Experiments}
\label{sec-5}
% We use the baselines in tianshou \footnote{https://github.com/thu-ml/tianshou}
% \ycy{\cite{chan2019measuring} tests Dispersion (IQR, inter-quartile range \cite{rousseeuw1993alternatives}) and Risk (CVaR) of RL algorithms in the training stage and test stage}
In this section, we empirically evaluate the performance and robustness of CPPO under both transition disturbance and observation disturbance in a series of continuous control tasks in MuJoCo~\cite{mujoco} against other common on-policy RL algorithms. 

\subsection{Experiment Setup}
\label{experiment_setup}
%环境、baseline、对比的metric、实验的主要超参

\textbf{Environments.} We choose MuJoCo \cite{mujoco} as our experimental environment. As a robotic locomotion simulator, MuJoCo has lots of different continuous control tasks like Ant, HalfCheetah, Walker2d, Swimmer and Hopper, which are widely used for the evaluation of RL algorithms.

\textbf{Baselines and Codes.} We compare our algorithm with common on-policy algorithms and previous CVaR-based algorithms. For the former, we choose Vanilla Policy Gradient (VPG)~\cite{PG}, Trust Region Policy Optimization (TRPO)~\cite{trpo} and PPO~\cite{ppo}. For the latter, we implement PG-CMDP~\cite{CVaRnips} with a deep neural network. We use Adam~\cite{adam} to optimize all algorithms. The implementation of all code, including CPPO and baselines, are based on the codebase SpinningUp~\cite{SpinningUp2018}.

%传统的robustness
%图：定义一个metric度量风险（和其前面的safe对应）表

\textbf{Evaluation.} First, we compare the cumulative reward of each algorithm in the training process and its performance after convergence. In order to measure the robustness and safety, we compare the performance under transition disturbance and observation disturbance, respectively. For transition perturbation, since MuJoCo is a physical simulation engine and its transition depends on its physical parameters, we choose to modify the mass of the agent to change the transition dynamics, and study the relationship between the agent's performance and the mass of the agent. For observation disturbance, we apply Gaussian disturbance to the agent's observation to study the relationship between the agent's performance and the magnitude of the disturbance.

\subsection{Performance in the Training Stage}
\label{experiment_per}

In this part, we compare the performance in the training stage of our CPPO against common on-policy algorithms as well as the previous CVaR-based algorithm in MuJoCo environments. For each algorithm in each task, we train 10 policies with different random seeds, since the environments and policies are stochastic. %\hangx{you should briefly mention the result and explain the reason to demonstrate the advantages of your method.} 
For each algorithm in each task, we also plot the mean and variance of the 10 policies as a function of timestep in the training stage, as shown in Figure \ref{fig_performance}. The solid line represents the average reward of 10 strategies, and the part with a lighter color represents their variance. The final mean and variance of the cumulative return of 10 policies trained by each algorithm in each environment are reported in Table \ref{performance_table}. As shown in the figures and the table, for all five tasks, CPPO learns a better policy compared with all baselines even when there is no transition or observation disturbance, especially in HalfCheetah, Swimmer and Hopper. Compared with VPG, TRPO and PG-CMDP, CPPO performs better since we use better policy optimization techniques. Moreover, the performance of CPPO exceeds PPO since penalizing trajectories with relatively low return can also benefit the cumulative return. %We also find that our CPPO gains a higher cumulative reward for the worst-case outcome than other baselines on Walker2d.
% \hangx{you should explain why we have better performance and echo the theoretical analysis.}
%\hangx{more analysis on why we can achieve this, remember to connect with our technical contribution}

\subsection{Robustness against Transition Disturbance}
\label{experiment_tra}

An agent may fail in the testing stage because of the transition gap between the simulator and the true environment. In this section, we choose to modify the mass of the robot and test the performance of agents with different transitions, i.e., we change the default mass of HalfCheetah (6.36), Walker2d (3.53), Swimmer (34.6) and Hopper (3.53). Then, we draw Figure \ref{fig_mass} to describe the results of agents that are trained under standard mass conditions and tested under different mass conditions. The solid line represents the average reward of 10 strategies, and the part with a lighter color represents their variance. As seen in this figure, the performance of all algorithms decreases to a certain extent with the change of agent quality (whether it becomes larger or smaller). The degree of the decrease is positively correlated with the quality change, which is consistent with our theoretical analysis in Theorem \ref{trans_disturb} --- that is, the upper bound of the performance difference of the algorithm is related to the size of the transition disturbance. Moreover, since the value functions of all states in these policies are relatively low and the VFR of these policies is low, %\hangx{here, we have no result on observation?} 
we discover that VPG and PG-CMDP stay robust under transition disturbance since their VFR is low, which is also shown in Theorem~\ref{trans_disturb}. At the same time, we can see that CPPO achieves a higher outcome in different tasks, especially in Swimmer and Hopper. This indicates that our method can improve the robustness of policies under transition disturbance since CPPO controls the risk theoretically related to the robustness under transition disturbance. %\hangx{by controlling the risk while maximizing the reward? we should explain the inner mechanism.}

\subsection{Robustness against Observation Disturbance}
\label{experiment_sta}

The agent may also fail in the testing stage because of the gap between its observation and the true state. Consequently, for evaluating the robustness of each algorithm under observation uncertainty, we add a standard Gaussian disturbance to the observation in the testing stage. For this purpose, we plot the performance of the trained policies under observation disturbance in Figure \ref{fig_ob}. The figure shows that the performance degradation is positively related to the size of the disturbance, which is shown in Theorem~\ref{state_disturb}. Similar to the result under transition disturbance, we can also discover that VPG and PG-CMDP stay robust under observation disturbance since their VFR is low, which is shown in Theorem~\ref{state_disturb}. As shown in the figure, CPPO has made significant progress in Swimmer and Hopper compared to baselines. Therefore, CPPO enables us to maintain robustness under observation disturbance, which is also because CPPO controls the risk theoretically related to the robustness under observation disturbance. We also evaluate their robustness under adversarial attacks of state observations and CPPO shows better robustness than other baselines under adversarial attacks. The detailed results are reported in Appendix~\ref{supp_experiment}.

\section{Conclusions}
In this paper, we first provide a theoretical connection between policies' robustness against transition disturbance and observation disturbance, although they are structurally different. Moreover, we analyze the advantages of CVaR for evaluating the uncertainty of policy compared with the worst-case outcome. Based on these analyses, we consider a risk-sensitive optimization objective and propose CPPO to solve it. Extensive experiments on various MuJoCo tasks show that CPPO obtains better performance as well as stronger robustness than various strong competitors.

\section*{Ethical Statement}
Deep reinforcement learning may encounter catastrophic failures due to uncertainty and it is imperative to develop safe reinforcement learning algorithms. This paper studies the robustness of reinforcement learning algorithms against transition and observation disturbance, which is beneficial for safe and reliable reinforcement learning. There are no serious ethics concerns as this is a basic research.

% For future work, we aim to use some off-policy algorithms such as DDPG and SAC to optimize this CVaR-constrained objective, and extend the theoretical analysis to the field of generalization of RL.

\section*{Acknowledgments}

This work was supported by the National Key Research and Development Program of China (No.s 2020AAA0106000, 2020AAA0104304, 2020AAA0106302), NSFC Projects (Nos. 62061136001, 61621136008, 62076147, U19B2034, U19A2081, U1811461), the major key project of PCL (No. PCL2021A12), Tsinghua-Huawei Joint Research Program, Tsinghua-Alibaba Joint Research Program, a grant from Tsinghua Institute for Guo Qiang, Tsinghua-OPPO Joint Research Center, Beijing Academy of Artificial Intelligence (BAAI), and the NVIDIA NVAIL Program with GPU/DGX Acceleration.

% \newpage
% \section*{Reproducibility Statement}
% % \junz{add the details to help reproduce our results, something like below}
% We ensure the reproducibility of our paper from two aspects. 
% (1) Experiment: The implementation and result of our experiment are described in Sec. \ref{sec-5}. 
% (2) Theory and Method: We provide the pseudo code of our algorithm in Appendix \ref{pseudecode}. We also provide complete proofs of all the theoretical results mentioned in the paper in Appendix \ref{appendix-theorems}.

% \section*{Ethics Statement}
% Deep reinforcement learning may encounter catastrophic failures due to the stochasticity. It is very imperative to develop safe reinforcement learning algorithms. This paper proposes a CPPO method to improve the robustness under observation and transition disturbance. Also, this paper provides theoretical analysis of the connection between observation and transition disturbance. It may promote the development of safe and reliable reinforcement learning algorithms in the future. 

% \newpage
%% The file named.bst is a bibliography style file for BibTeX 0.99c
\bibliographystyle{named}
\bibliography{ijcai22}

\begin{thebibliography}{}

\bibitem[\protect\citeauthoryear{Achiam}{2018}]{SpinningUp2018}
Joshua Achiam.
\newblock {Spinning Up in Deep Reinforcement Learning}.
\newblock 2018.

\bibitem[\protect\citeauthoryear{Alexander and
  Baptista}{2004}]{alexander2004comparison}
Gordon~J Alexander and Alexandre~M Baptista.
\newblock A comparison of var and cvar constraints on portfolio selection with
  the mean-variance model.
\newblock {\em Management Science}, 50(9):1261--1273, 2004.

\bibitem[\protect\citeauthoryear{Alexander \bgroup \em et al.\egroup
  }{2006}]{alexander2006minimizing}
Siddharth Alexander, Thomas~F Coleman, and Yuying Li.
\newblock Minimizing cvar and var for a portfolio of derivatives.
\newblock {\em Journal of Banking \& Finance (JBF)}, 30(2):583--605, 2006.

\bibitem[\protect\citeauthoryear{Bertsekas}{1997}]{nonlinear}
Dimitri~P Bertsekas.
\newblock Nonlinear programming.
\newblock {\em Journal of the Operational Research Society (JORS)},
  48(3):334--334, 1997.

\bibitem[\protect\citeauthoryear{Chow and Ghavamzadeh}{2014}]{CVaRnips}
Yinlam Chow and Mohammad Ghavamzadeh.
\newblock Algorithms for cvar optimization in mdps.
\newblock In {\em Proceedings of the 27th International Conference on Neural
  Information Processing Systems (NeurIPS)}, pages 3509--3517, 2014.

\bibitem[\protect\citeauthoryear{Chow \bgroup \em et al.\egroup
  }{2015}]{CVaRnips2}
Yinlam Chow, Aviv Tamar, Shie Mannor, and Marco Pavone.
\newblock Risk-sensitive and robust decision-making: a cvar optimization
  approach.
\newblock {\em Advances in Neural Information Processing Systems (NeurIPS)},
  28:1522--1530, 2015.

\bibitem[\protect\citeauthoryear{Chow \bgroup \em et al.\egroup
  }{2017}]{CVaRml}
Yinlam Chow, Mohammad Ghavamzadeh, Lucas Janson, and Marco Pavone.
\newblock Risk-constrained reinforcement learning with percentile risk
  criteria.
\newblock {\em The Journal of Machine Learning Research (JMLR)},
  18(1):6070--6120, 2017.

\bibitem[\protect\citeauthoryear{Dabney \bgroup \em et al.\egroup
  }{2018}]{dabney2018implicit}
Will Dabney, Georg Ostrovski, David Silver, and R{\'e}mi Munos.
\newblock Implicit quantile networks for distributional reinforcement learning.
\newblock In {\em International conference on machine learning}, pages
  1096--1105. PMLR, 2018.

\bibitem[\protect\citeauthoryear{Garc{\i}a and Fern{\'a}ndez}{2015}]{survey15}
Javier Garc{\i}a and Fernando Fern{\'a}ndez.
\newblock A comprehensive survey on safe reinforcement learning.
\newblock {\em Journal of Machine Learning Research (JMLR)}, 16(1):1437--1480,
  2015.

\bibitem[\protect\citeauthoryear{Goodfellow \bgroup \em et al.\egroup
  }{2014}]{goodfellow2014explaining}
Ian~J Goodfellow, Jonathon Shlens, and Christian Szegedy.
\newblock Explaining and harnessing adversarial examples.
\newblock {\em arXiv preprint arXiv:1412.6572}, 2014.

\bibitem[\protect\citeauthoryear{Heger}{1994}]{Heger1994}
Matthias Heger.
\newblock Consideration of risk in reinforcement learning.
\newblock In {\em Machine Learning Proceedings 1994}, pages 105--111. Elsevier,
  1994.

\bibitem[\protect\citeauthoryear{Huang \bgroup \em et al.\egroup
  }{2017}]{sandy17}
Sandy Huang, Nicolas Papernot, Ian Goodfellow, Yan Duan, and Pieter Abbeel.
\newblock Adversarial attacks on neural network policies.
\newblock {\em arXiv preprint arXiv:1702.02284}, 2017.

\bibitem[\protect\citeauthoryear{Kakade and
  Langford}{2002}]{kakade2002approximately}
Sham Kakade and John Langford.
\newblock Approximately optimal approximate reinforcement learning.
\newblock In {\em In Proc. 19th International Conference on Machine Learning
  (ICML)}. Citeseer, 2002.

\bibitem[\protect\citeauthoryear{Kendall \bgroup \em et al.\egroup
  }{2019}]{kendall2019learning}
Alex Kendall, Jeffrey Hawke, David Janz, Przemyslaw Mazur, Daniele Reda,
  John-Mark Allen, Vinh-Dieu Lam, Alex Bewley, and Amar Shah.
\newblock Learning to drive in a day.
\newblock In {\em 2019 International Conference on Robotics and Automation
  (ICRA)}, pages 8248--8254. IEEE, 2019.

\bibitem[\protect\citeauthoryear{Kingma and Ba}{2015}]{adam}
Diederik~P. Kingma and Jimmy Ba.
\newblock Adam: A method for stochastic optimization.
\newblock In {\em International Conference on Learning Representations (ICLR)},
  2015.

\bibitem[\protect\citeauthoryear{Ma \bgroup \em et al.\egroup
  }{2020}]{ma2020dsac}
Xiaoteng Ma, Li~Xia, Zhengyuan Zhou, Jun Yang, and Qianchuan Zhao.
\newblock Dsac: Distributional soft actor critic for risk-sensitive
  reinforcement learning.
\newblock {\em arXiv preprint arXiv:2004.14547}, 2020.

\bibitem[\protect\citeauthoryear{Mnih \bgroup \em et al.\egroup
  }{2015}]{atari_nature}
Volodymyr Mnih, Koray Kavukcuoglu, David Silver, Andrei~A. Rusu, Joel Veness,
  Marc~G. Bellemare, Alex Graves, Martin~A. Riedmiller, Andreas Fidjeland,
  Georg Ostrovski, Stig Petersen, Charles Beattie, Amir Sadik, Ioannis
  Antonoglou, Helen King, Dharshan Kumaran, Daan Wierstra, Shane Legg, and
  Demis Hassabis.
\newblock Human-level control through deep reinforcement learning.
\newblock {\em Nature}, 518(7540):529--533, 2015.

\bibitem[\protect\citeauthoryear{Nilim and El~Ghaoui}{2005}]{Arnab2005}
Arnab Nilim and Laurent El~Ghaoui.
\newblock Robust control of markov decision processes with uncertain transition
  matrices.
\newblock {\em Operations Research}, 53(5):780--798, 2005.

\bibitem[\protect\citeauthoryear{Prashanth}{2014}]{prashanth2014policy}
LA~Prashanth.
\newblock Policy gradients for cvar-constrained mdps.
\newblock In {\em International Conference on Algorithmic Learning Theory},
  pages 155--169. Springer, 2014.

\bibitem[\protect\citeauthoryear{Rigter \bgroup \em et al.\egroup
  }{2021}]{rigter2021risk}
Marc Rigter, Bruno Lacerda, and Nick Hawes.
\newblock Risk-averse bayes-adaptive reinforcement learning.
\newblock {\em Advances in Neural Information Processing Systems}, 34, 2021.

\bibitem[\protect\citeauthoryear{Rockafellar \bgroup \em et al.\egroup
  }{2000}]{risk}
R~Tyrrell Rockafellar, Stanislav Uryasev, et~al.
\newblock Optimization of conditional value-at-risk.
\newblock 2000.

\bibitem[\protect\citeauthoryear{Schulman \bgroup \em et al.\egroup
  }{2015}]{trpo}
John Schulman, Sergey Levine, Pieter Abbeel, Michael Jordan, and Philipp
  Moritz.
\newblock Trust region policy optimization.
\newblock In {\em International conference on machine learning (ICML)}, pages
  1889--1897. PMLR, 2015.

\bibitem[\protect\citeauthoryear{Schulman \bgroup \em et al.\egroup
  }{2017}]{ppo}
John Schulman, Filip Wolski, Prafulla Dhariwal, Alec Radford, and Oleg Klimov.
\newblock Proximal policy optimization algorithms.
\newblock {\em arXiv preprint arXiv:1707.06347}, 2017.

\bibitem[\protect\citeauthoryear{Silver \bgroup \em et al.\egroup
  }{2016}]{go_nature}
David Silver, Aja Huang, Chris~J. Maddison, Arthur Guez, Laurent Sifre, George
  van~den Driessche, Julian Schrittwieser, Ioannis Antonoglou, Vedavyas
  Panneershelvam, Marc Lanctot, Sander Dieleman, Dominik Grewe, John Nham, Nal
  Kalchbrenner, Ilya Sutskever, Timothy~P. Lillicrap, Madeleine Leach, Koray
  Kavukcuoglu, Thore Graepel, and Demis Hassabis.
\newblock Mastering the game of go with deep neural networks and tree search.
\newblock {\em Nature}, 529(7587):484--489, 2016.

\bibitem[\protect\citeauthoryear{Sutton and
  Barto}{2018}]{sutton2018reinforcement}
Richard~S Sutton and Andrew~G Barto.
\newblock {\em Reinforcement learning: An introduction}.
\newblock MIT press, 2018.

\bibitem[\protect\citeauthoryear{Sutton \bgroup \em et al.\egroup }{2000}]{PG}
Richard~S Sutton, David~A McAllester, Satinder~P Singh, and Yishay Mansour.
\newblock Policy gradient methods for reinforcement learning with function
  approximation.
\newblock In {\em Advances in neural information processing systems (NeurIPS)},
  pages 1057--1063, 2000.

\bibitem[\protect\citeauthoryear{Tamar \bgroup \em et al.\egroup
  }{2013}]{Aviv2013}
Aviv Tamar, Huan Xu, and Shie Mannor.
\newblock Scaling up robust mdps by reinforcement learning.
\newblock {\em arXiv preprint arXiv:1306.6189}, 2013.

\bibitem[\protect\citeauthoryear{Tamar \bgroup \em et al.\egroup
  }{2015}]{tamar2015optimizing}
Aviv Tamar, Yonatan Glassner, and Shie Mannor.
\newblock Optimizing the cvar via sampling.
\newblock In {\em Twenty-Ninth AAAI Conference on Artificial Intelligence},
  2015.

\bibitem[\protect\citeauthoryear{Tang \bgroup \em et al.\egroup
  }{2020}]{tang2020worst}
Yichuan~Charlie Tang, Jian Zhang, and Ruslan Salakhutdinov.
\newblock Worst cases policy gradients.
\newblock In {\em Conference on Robot Learning}, pages 1078--1093. PMLR, 2020.

\bibitem[\protect\citeauthoryear{Todorov \bgroup \em et al.\egroup
  }{2012}]{mujoco}
Emanuel Todorov, Tom Erez, and Yuval Tassa.
\newblock Mujoco: A physics engine for model-based control.
\newblock In {\em 2012 IEEE/RSJ International Conference on Intelligent Robots
  and Systems}, pages 5026--5033. IEEE, 2012.

\bibitem[\protect\citeauthoryear{Wiesemann \bgroup \em et al.\egroup
  }{2013}]{wiesemann2013robust}
Wolfram Wiesemann, Daniel Kuhn, and Ber{\c{c}} Rustem.
\newblock Robust markov decision processes.
\newblock {\em Mathematics of Operations Research}, 38(1):153--183, 2013.

\bibitem[\protect\citeauthoryear{Yang \bgroup \em et al.\egroup
  }{2021}]{yang2021wcsac}
Qisong Yang, Thiago~D Sim{\~a}o, Simon~H Tindemans, and Matthijs~TJ Spaan.
\newblock Wcsac: Worst-case soft actor critic for safety-constrained
  reinforcement learning.
\newblock In {\em Proceedings of the Thirty-Fifth AAAI Conference on Artificial
  Intelligence. AAAI Press, online}, 2021.

\bibitem[\protect\citeauthoryear{Zhang \bgroup \em et al.\egroup
  }{2020}]{Huan2020}
Huan Zhang, Hongge Chen, Chaowei Xiao, Bo~Li, Mingyan Liu, Duane Boning, and
  Cho-Jui Hsieh.
\newblock Robust deep reinforcement learning against adversarial perturbations
  on state observations.
\newblock {\em Advances in Neural Information Processing Systems (NeurIPS)},
  33:21024--21037, 2020.

\end{thebibliography}

\newpage

\clearpage 
\appendix

\section{Pseudo Code of CPPO}
\label{pseudecode}
\begin{algorithm}[!htbp]
    \caption{CVaR Proximal Policy Optimization(CPPO)}  
    \label{CPPO}
    \begin{algorithmic} %每行显示行号  
        \REQUIRE confidence level $\alpha$ and reward tolerance $\beta$, learning rate $lr_{\eta}, lr_{\theta}, lr_{\lambda}, lr_{\phi}$
        \ENSURE $\theta$ of parameterized policy $\pi_{\theta}$(always be random policy), $\phi$ of parameterized value function $V_{\phi}$.
        \FOR{$k = 1,2,..., N_{iter}$}
        \STATE 
        Generate $N$ trajectories $\mathcal{D}_k = \{\xi_i\}_{i=1}^N$ by following the current policy $\pi_{\theta}$.
        \STATE
        Compute reward $\hat R_i^t$ of each state $s_{i,t}$ in each trajectory $\xi_i$ and the cumulative reward $D(\xi_i)$.
        \STATE
        Compute advantage estimates $\hat A_i^t$ of each state $s_{i,t}$ in each trajectory $\xi_i$.
        \STATE 
        Update parameters respectively:
        \begin{flalign*} 
        \eta \gets \eta -& lr_{\eta} \left(-\lambda + \frac{\lambda}{N(1-\alpha)} \sum_{i=1}^N \textbf{1}\{\eta\ge D(\xi_i)\})\right)
        & \nonumber\\ 
        \theta \gets \theta -&lr_{\theta}\left[\frac{1}{N}\sum_{i=1}^N(\nabla_{\theta}\log P_{\theta}(\xi_i))\frac{\lambda}{1-\alpha}(\eta-D(\xi_i))^+\right.\\
        +  \frac{1}{NT}&\left.\sum_{i=1}^N\sum_{t=0}^T \nabla_{\theta}\min \left( \frac{\pi_{\theta}(a_i^t|s_i^t)}{\pi_{\theta_k}(a_i^t|s_i^t)} \hat A_i^t, g(\epsilon, \hat A_i^t) \right)\right]
        & \nonumber \\ 
        \lambda \gets \lambda +& lr_{\lambda} \left(\frac{\sum_{i=1}^N (\eta-D(\xi_i))^+}{N(1-\alpha)}- \eta + \beta\right)
        & \nonumber \\ 
        \phi \gets \phi +& lr_{\phi}\left(\sum_{i=1}^N\sum_{t=0}^T\frac{ 2(V_{\phi}(s_{i,t}) - \hat R_i^t)\nabla_{\phi}V_{\phi}(s_{i,t})}{NT}\right)
        & \nonumber 
        \end{flalign*}
        \STATE Modify $\beta$ as a function of the return of current trajectories:
        \begin{flalign*}
        \beta\leftarrow l(\xi_1, \xi_2, ..., \xi_N)
        \end{flalign*}
        \ENDFOR
    \end{algorithmic}  
\end{algorithm} 

\newpage

\section{Proofs of Theorems}
\label{appendix-theorems}
In this section, we will provide the proofs of theorems proposed in the paper.

\subsection{The Proof of Theorem \ref{trans_disturb} and Theorem \ref{state_disturb}}
% \subsection{The Proof of Theorem 1 and Theorem 2}
\label{thm_pf_3_4}
% Before proving Theorem \ref{trans_disturb} and Theorem \ref{state_disturb}, we first examine a property of 
Before proving Theorem~\ref{trans_disturb} and Theorem~\ref{state_disturb}, we first examine a property of $d_{\mathcal{M}}^{\pi}$:
\begin{lem}
\label{state_dis_lem}
For any state $s\in \mathcal{S}$, we have:
\begin{equation}
\begin{split}
    &d_{\mathcal{M}}^{\pi}(s) - (1-\gamma)\mathcal{P}(s_0=s)\\
    =& \gamma\sum_{s'}d_{\mathcal{M}}^{\pi}(s')\sum_{a}\pi(a|s')\mathcal{P}(s|s', a).
\end{split}
\end{equation}
\end{lem}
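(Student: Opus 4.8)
The plan is to work directly from the definition of the discounted future state distribution and isolate the contribution of the initial time step. Writing $p_t(s) := \mathcal{P}(s_t = s \mid \pi, \mathcal{M})$ for brevity, the definition reads $d_{\mathcal{M}}^{\pi}(s) = (1-\gamma)\sum_{t=0}^{\infty} \gamma^t p_t(s)$. First I would peel off the $t=0$ term. Since $p_0(s) = \mathcal{P}(s_0=s)$, this gives
\[
d_{\mathcal{M}}^{\pi}(s) - (1-\gamma)\mathcal{P}(s_0=s) = (1-\gamma)\sum_{t=1}^{\infty} \gamma^t p_t(s),
\]
which reduces the claim to re-expressing the tail sum in terms of $d_{\mathcal{M}}^{\pi}$ itself.

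The central ingredient is the one-step forward (Chapman--Kolmogorov) recursion for the occupancy probabilities under $\pi$: for every $t \ge 1$,
\[
p_t(s) = \sum_{s'} p_{t-1}(s') \sum_a \pi(a|s') \mathcal{P}(s|s',a),
\]
which follows from the Markov property by conditioning on the state $s'$ and action $a$ at time $t-1$. Substituting this into the tail sum, reindexing with $\tau = t-1$, and pulling out one factor of $\gamma$, I would regroup so that the inner quantity $(1-\gamma)\sum_{\tau=0}^{\infty} \gamma^{\tau} p_{\tau}(s')$ is recognizable as $d_{\mathcal{M}}^{\pi}(s')$. This produces exactly the right-hand side $\gamma \sum_{s'} d_{\mathcal{M}}^{\pi}(s') \sum_a \pi(a|s')\mathcal{P}(s|s',a)$, completing the identity.

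The only technical point worth care is the interchange of the infinite sum over $t$ with the (countable) sums over $s'$ and $a$, together with the shift of summation index. Since every summand is non-negative, Tonelli's theorem justifies all of these rearrangements, so no delicate convergence argument is required. I therefore do not expect a genuine obstacle: the statement is essentially the balance (stationarity) equation for the normalized discounted occupancy measure, and the proof is a short manipulation once the forward recursion for $p_t$ is established.
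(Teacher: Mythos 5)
Your proposal is correct and takes essentially the same route as the paper's proof: both peel off the $t=0$ term, apply the one-step Markov recursion for the occupancy probabilities (the paper phrases it by factoring the joint probability $\mathcal{P}(s_{t-1}=s',\,s_t=s\,|\,\pi,\mathcal{M})$ and invoking time-homogeneity, you phrase it as the Chapman--Kolmogorov forward equation), reindex the tail sum, and recognize the inner sum as $d_{\mathcal{M}}^{\pi}(s')$. Your explicit appeal to Tonelli's theorem for the interchange of sums is a point of rigor the paper leaves implicit, but it does not change the argument.
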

\begin{proof}
By the definition of $d_{\mathcal{M}}^{\pi}(s)$, we have:
\begin{equation}
\begin{split}
&d_{\mathcal{M}}^{\pi}(s) - (1-\gamma)\mathcal{P}(s_0=s) \\
=& (1-\gamma)\sum_{t=1}^{\infty}\sum_{s'} \gamma^t \mathcal{P}(s_{t-1} = s', s_t=s|\pi,\mathcal{M})\\  
=&  (1-\gamma)\sum_{t=0}^{\infty}\sum_{s'} \gamma^{t+1} \mathcal{P}(s_{t} = s'| \pi,\mathcal{M})\mathcal{P}(s_{t+1} = s| s_{t}=s',\pi,\mathcal{M})\\  
=&  \gamma\sum_{s'}\left[(1-\gamma)\sum_{t=0}^{\infty} \gamma^{t} \mathcal{P}(s_{t} = s'| \pi,\mathcal{M})\right]\mathcal{P}(s_{1} = s| s_{0}=s',\pi,\mathcal{M})\\  
=& \gamma\sum_{s'} d_{\mathcal{M}}^{\pi}(s') \mathcal{P}(s_{1} = s| s_{0}=s',\pi,\mathcal{M})\\  
=& \gamma\sum_{s'}d_{\mathcal{M}}^{\pi}(s')\sum_{a}\pi(a|s')\mathcal{P}(s|s', a). 
\end{split}
\end{equation}
Thus we have proven it.   
\end{proof}

Now we will first prove Theorem~\ref{trans_disturb} based on Lemma~\ref{state_dis_lem}.

% \begin{reptheorem}{trans_disturb}
% For any policy $\pi$ in MDP $\mathcal{M} = (\mathcal{S}, \mathcal{A}, \mathcal{P}, \mathcal{R}, \gamma)$ and any disturbed environment $\hat{\mathcal{M}} = (\mathcal{S}, \mathcal{A}, \hat{\mathcal{P}}, \mathcal{R}, \gamma)$, the reduction of cumulative reward against the transition disturbance is 
% \begin{equation}
% \begin{split}
%     & J_{\mathcal{M}}(\pi) - J_{\hat{\mathcal{M}}}(\pi) \\
%     = &\frac{\gamma}{1-\gamma} \mathbb{E}_{s\sim d_{\hat{\mathcal{M}}}^{\pi}} \mathbb{E}_{a\sim\pi} \mathbb{E}_{s'\sim \hat{\mathcal{P}}} \left(1 - \frac{\mathcal{P}(s'|s, a)}{\hat{\mathcal{P}}(s'|s, a)}\right)V_{\mathcal{M}, \pi}(s').
% \end{split}
% \end{equation}
% Furthermore, we can give a upper bound of the reduction is therefore
% \begin{equation}
% \begin{split}
%     &J_{\mathcal{M}}(\pi) - J_{\hat{\mathcal{M}}}(\pi) \\
%     \leq &\frac{2\gamma}{1-\gamma} \max_{s,a} D_{\mathrm{TV}}(\mathcal{P}(\cdot|s,a), \hat{\mathcal{P}}(\cdot|s,a)) \hat{V}_{\mathcal{M}, \pi}.
% \end{split}
% \end{equation}
% \end{reptheorem}

\begin{proof}
First, considering the bellman equation of value function of $\pi$ in $\mathcal{M}$ and $\hat{\mathcal{M}}$, we have
\begin{equation}
\begin{split}
    V_{\mathcal{M}, \pi}(s) 
    &= \sum_a \pi(a|s)[\mathcal{R}(s, a) + \gamma \sum_{s'}\mathcal{P}(s'|s, a)V_{\mathcal{M}, \pi}(s')],\\
    V_{\hat{\mathcal{M}}, \pi}(s) &= \sum_a \pi(a|s)[\mathcal{R}(s, a) + \gamma \sum_{s'}\hat{\mathcal{P}}(s'|s, a)V_{\hat{\mathcal{M}}, \pi}(s')].
\end{split}
\end{equation}
We define $\Delta V(s) \triangleq V_{\hat{\mathcal{M}}, \pi}(s)-V_{\mathcal{M}, \pi}(s)$ as the difference of these two value functions and can calculate that
\begin{equation}
\begin{split}
\label{22}
    &\Delta V(s)\\
    = &V_{\hat{\mathcal{M}}, \pi}(s)-V_{\mathcal{M}, \pi}(s) \\
    =& \gamma\sum_a \pi(a|s) \sum_{s'}\Delta\mathcal{P}(s'|s,a)V_{\mathcal{M}, \pi}(s')\\
    +& \gamma\sum_a \pi(a|s) \sum_{s'}\hat{\mathcal{P}}(s'|s, a)( V_{\hat{\mathcal{M}}, \pi}(s') - V_{\mathcal{M}, \pi}(s'))\\
    =& \gamma\sum_a \pi(a|s) \sum_{s'}\Delta\mathcal{P}(s'|s,a)V_{\mathcal{M}, \pi}(s')\\
    +& \gamma\sum_a \pi(a|s) \sum_{s'}\hat{\mathcal{P}}(s'|s, a) \Delta V(s'),
\end{split}
\end{equation}
here we set $\Delta\mathcal{P}(s'|s,a) = \hat{\mathcal{P}}(s'|s, a) - \mathcal{P}(s'|s, a)$. Since equation (\ref{22}) satisfies for every state $s$, thus we calculate the expectation of equation (\ref{22}) for $s\sim d_{\mathcal{M}}^{\hat\pi_{\nu}}$ and use Lemma~\ref{state_dis_lem}:
\begin{equation}
\begin{split}
\label{23}
    &\sum_s d_{\hat{\mathcal{M}}}^{\pi}(s) \Delta V(s) \\
    =& \gamma \sum_s d_{\hat{\mathcal{M}}}^{\pi}(s) \sum_a \pi(a|s) \sum_{s'}\Delta\mathcal{P}(s'|s,a) V_{\mathcal{M}, \pi}(s')\\
    +& \gamma\sum_s d_{\hat{\mathcal{M}}}^{\pi}(s) \sum_a \pi(a|s) \sum_{s'}\hat{\mathcal{P}}(s'|s, a) \Delta V(s')\\
    =& \gamma \sum_s d_{\hat{\mathcal{M}}}^{\pi}(s) \sum_a \pi(a|s) \sum_{s'}\Delta\mathcal{P}(s'|s,a) V_{\mathcal{M}, \pi}(s')\\
    +& \sum_{s'}\Delta V(s') \left[\gamma\sum_s d_{\hat{\mathcal{M}}}^{\pi}(s) \sum_a \pi(a|s) \hat{\mathcal{P}}(s'|s, a)\right]\\
    =& \gamma \sum_s d_{\hat{\mathcal{M}}}^{\pi}(s) \sum_a \pi(a|s) \sum_{s'}\Delta\mathcal{P}(s'|s,a)V_{\mathcal{M}, \pi}(s')\\
    +& \sum_{s'}\Delta V(s') \left[d_{\hat{\mathcal{M}}}^{\pi}(s') - (1-\gamma)\mathcal{P}(s_0=s')\right].
\end{split}
\end{equation}
By moving the second term of the right part in (\ref{23}) to the left part, we can deduce that
\begin{equation}
\begin{split}
    &(1-\gamma)\sum_{s'}\Delta V(s') \mathcal{P}(s_0=s')\\
    =& \gamma \sum_s d_{\hat{\mathcal{M}}}^{\pi}(s) \sum_a \pi(a|s) \sum_{s'}\Delta\mathcal{P}(s'|s,a)V_{\mathcal{M}, \pi}(s'),
\end{split}
\end{equation}
thus we have
\begin{equation}
\begin{split}
    &(1-\gamma) (J_{\hat{\mathcal{M}}}(\pi) - J_{\mathcal{M}}(\pi)) \\
    =& (1-\gamma)\sum_{s'}\Delta V(s') \mathcal{P}(s_0=s')\\
    =& \gamma \sum_s d_{\hat{\mathcal{M}}}^{\pi}(s) \sum_a \pi(a|s) \sum_{s'}\Delta\mathcal{P}(s'|s,a) V_{\mathcal{M}, \pi}(s')\\
    =& \gamma \mathbb{E}_{s\sim d_{\hat{\mathcal{M}}}^{\pi}} \mathbb{E}_{a\sim\pi(\cdot|s)}\mathbb{E}_{s'\sim \hat{\mathcal{P}}(\cdot|s, a)} \left(1 - \frac{\mathcal{P}(s'|s, a)}{\hat{\mathcal{P}}(s'|s, a)}\right)V_{\mathcal{M}, \pi}(s').
\end{split}
\end{equation}
Thus we have proven that
\begin{equation}
\begin{split}
    &J_{\hat{\mathcal{M}}}(\pi) - J_{\mathcal{M}}(\pi)\\
    =& \frac{\gamma}{1-\gamma} \mathbb{E}_{s\sim d_{\hat{\mathcal{M}}}^{\pi}} \mathbb{E}_{a\sim\pi}\mathbb{E}_{s'\sim \hat{\mathcal{P}}} \left(1 - \frac{\mathcal{P}(s'|s, a)}{\hat{\mathcal{P}}(s'|s, a)}\right)V_{\mathcal{M}, \pi}(s').
\end{split}
\end{equation}
We take 
\begin{equation*}
\begin{split}
    \bar{V}_{\mathcal{M}, \pi} &= \frac{1}{2}\left( \max_{s'}V_{\mathcal{M}, \pi}(s') + \min_{s'}V_{\mathcal{M}, \pi}(s')\right)  \\
    \hat{V}_{\mathcal{M}, \pi} &=  \max_{s'}V_{\mathcal{M}, \pi}(s') - \min_{s'}V_{\mathcal{M}, \pi}(s')\quad (\text{VFR}). 
\end{split}
\end{equation*}
And we have
\begin{equation*}
\begin{split}
    &|V_{\mathcal{M}, \pi}(s)-\bar{V}_{\mathcal{M}, \pi}|\\
    =& \left|V_{\mathcal{M}, \pi}(s)-\frac{1}{2}\left( \max_{s'}V_{\mathcal{M}, \pi}(s') + \min_{s'}V_{\mathcal{M}, \pi}(s')\right)\right|\\
    \leq& \frac{1}{2}\left| \max_{s'}V_{\mathcal{M}, \pi}(s') - V_{\mathcal{M}, \pi}(s) \right|
    + \frac{1}{2}\left|V_{\mathcal{M}, \pi}(s)- \min_{s'}V_{\mathcal{M}, \pi}(s')\right|\\
    % =& \frac{1}{2}\left( \max_{s'}V_{\mathcal{M}, \pi}(s') - V_{\mathcal{M}, \pi}(s) 
    % + V_{\mathcal{M}, \pi}(s)- \min_{s'}V_{\mathcal{M}, \pi}(s')\right)\\
    =& \frac{1}{2}\left( \max_{s'}V_{\mathcal{M}, \pi}(s') - \min_{s'}V_{\mathcal{M}, \pi}(s')\right) 
    = \frac{\hat{V}_{\mathcal{M}, \pi}}{2},
\end{split}
\end{equation*}
holds for every state $s$, thus we can prove:
\begin{equation}
\begin{split}
    &|J_{\mathcal{M}}(\pi) - J_{\mathcal{M}}(\hat\pi_{\nu})| \\
    \overset{1}{=} &\frac{\gamma}{1-\gamma} \mathbb{E}_{s\sim d_{\hat{\mathcal{M}}}^{\pi}} \mathbb{E}_{a\sim\pi}\mathbb{E}_{s'\sim \hat{\mathcal{P}}} \left(1 - \frac{\mathcal{P}(s'|s, a)}{\hat{\mathcal{P}}(s'|s, a)}\right) \left(V_{\mathcal{M}, \pi}(s') - \bar{V}_{\mathcal{M}, \pi}\right)\\
    \leq & \frac{\gamma}{1-\gamma} \mathbb{E}_{s\sim d_{\hat{\mathcal{M}}}^{\pi}} \mathbb{E}_{a\sim\pi(\cdot|s)} \mathbb{E}_{s'\sim \hat{\mathcal{P}}(\cdot|s, a)} \left|1 - \frac{\mathcal{P}(s'|s, a)}{\hat{\mathcal{P}}(s'|s, a)}\right|\\
    &\left|V_{\mathcal{M}, \pi}(s') - \bar{V}_{\mathcal{M}, \pi}\right|\\
    \leq & \frac{\gamma}{1-\gamma} \mathbb{E}_{s\sim d_{\hat{\mathcal{M}}}^{\pi}} \mathbb{E}_{a\sim\pi(\cdot|s)} \mathbb{E}_{s'\sim \hat{\mathcal{P}}(\cdot|s, a)} \left|1 - \frac{ \mathcal{P}(s'|s, a)}{\hat{\mathcal{P}}(s'|s, a)}\right| \frac{\hat{V}_{\mathcal{M}, \pi}}{2}\\
    = & \frac{\gamma}{1-\gamma} \mathbb{E}_{s\sim d_{\hat{\mathcal{M}}}^{\pi}} \mathbb{E}_{a\sim\pi(\cdot|s)} \sum_{s'} \left|\hat{\mathcal{P}}(s'|s, a) -  \mathcal{P}(s'|s, a)\right| \frac{\hat{V}_{\mathcal{M}, \pi}}{2}\\
    = & \frac{\gamma}{1-\gamma} \mathbb{E}_{s\sim d_{\hat{\mathcal{M}}}^{\pi}} \mathbb{E}_{a\sim\pi(\cdot|s)} D_{\mathrm{TV}}(\mathcal{P}(\cdot|s,a), \hat{\mathcal{P}}(\cdot|s,a)) \hat{V}_{\mathcal{M}, \pi},
\end{split}
\end{equation}
here the equality 1 holds since $\mathbb{E}_{s'\sim \hat{\mathcal{P}}} \left(1 - \frac{\mathcal{P}(s'|s, a)}{\hat{\mathcal{P}}(s'|s, a)}\right) = 0$ holds for any $s,a$ and $\bar{V}_{\mathcal{M}, \pi}$ is independent of $s'$, which means $\mathbb{E}_{s'\sim \hat{\mathcal{P}}} \left(1 - \frac{\mathcal{P}(s'|s, a)}{\hat{\mathcal{P}}(s'|s, a)}\right)\bar{V}_{\mathcal{M}, \pi} = 0$ holds for any $s,a$. Thus we have proven Theorem~\ref{trans_disturb}. 

% Thus we have proven Theorem \ref{trans_disturb}. 
\end{proof}

% Moreover, we will prove Theorem \ref{state_disturb} by using the similar method of Theorem \ref{trans_disturb}.

Moreover, we will prove Theorem~\ref{state_disturb} by using the similar method of Theorem~\ref{trans_disturb}.

% \begin{reptheorem}{state_disturb}
% For any policy $\pi$ and any adversary $\nu$, we can calculate the reduction of expected cumulative reward of $\pi$ against the observation disturbance of $\nu$ as 
% \begin{equation}
% \begin{split}
%     &J_{\mathcal{M}}(\pi) - J_{\mathcal{M}}(\hat\pi_{\nu}) \\
%     =& \frac{\gamma}{1-\gamma} \mathbb{E}_{s\sim d_{\mathcal{M}}^{\hat\pi_{\nu}}} \mathbb{E}_{a\sim\pi(\cdot|\nu(s))} \left(1- \frac{\pi(a|s)}{\pi(a|\nu(s))}\right)\mathbb{E}_{s'\sim \mathcal{P}} V_{\mathcal{M}, \pi}(s')\\
%     +& \frac{1}{1-\gamma}\mathbb{E}_{s\sim d_{\mathcal{M}}^{\hat\pi_{\nu}}} \mathbb{E}_{a\sim\pi(\cdot|\nu(s))} \left(1- \frac{\pi(a|s)}{\pi(a|\nu(s))}\right)R(s, a).
% \end{split}
% \end{equation}
% Furthermore, we can give an upper bound of it:
% \begin{equation}
% \begin{split}
%      &|J_{\mathcal{M}}(\pi) - J_{\mathcal{M}}(\hat\pi_{\nu})|\\
%      &\leq \frac{\gamma}{1-\gamma} \max_s D_{\mathrm{TV}}(\pi(\cdot|s), \pi(\cdot|\nu(s))) \hat{V}_{\mathcal{M}, \pi}\\
%     &+ \frac{2}{1-\gamma}\max_s D_{\mathrm{TV}}(\pi(\cdot|s), \pi(\cdot|\nu(s)) \max_{s,a}|R(s, a)|.
% \end{split}
% \end{equation}
% \end{reptheorem}

\begin{proof}
Considering the bellman equation of value function of $\pi,\hat\pi_{\nu}$ in  $\mathcal{M}$, we have:
\begin{equation*}
\begin{split}
    V_{\mathcal{M}, \pi}(s) =& \sum_a \pi(a|s)[\mathcal{R}(s, a) + \gamma \sum_{s'}\mathcal{P}(s'|s, a)V_{\mathcal{M}, \pi}(s')],\\
    V_{\mathcal{M}, \hat\pi_{\nu}}(s) 
    =& \sum_a \pi(a|\nu(s))[\mathcal{R}(s, a) + \gamma \sum_{s'}\mathcal{P}(s'|s, a)V_{\mathcal{M}, \hat\pi_{\nu}}(s')].
\end{split}
\end{equation*}
Similarly, we define $\Delta V(s) \triangleq V_{\mathcal{M}, \hat\pi_{\nu}}(s) - V_{\mathcal{M}, \pi}(s)$ as the difference of these two value functions and we can deduce that
\begin{equation}
\begin{split}
\label{15}
    \Delta V(s) 
    =&V_{\mathcal{M}, \hat\pi_{\nu}}(s) - V_{\mathcal{M}, \pi}(s) \\
    =& \gamma\sum_a \Delta\pi(a|s) \sum_{s'}\mathcal{P}(s'|s, a)V_{\mathcal{M}, \pi}(s')\\
    +& \gamma\sum_a \pi(a|\nu(s)) \sum_{s'}\mathcal{P}(s'|s, a)\Delta V(s')\\
    +& \sum_a \Delta\pi(a|s)\mathcal{R}(s, a),
\end{split}
\end{equation}
here we set $\Delta\pi(a|s)= \pi(a|\nu(s)) - \pi(a|s)$. Since equation (\ref{15}) satisfies for every state $s$, thus we calculate the expectation of equation (\ref{15}) for $s\sim d_{\mathcal{M}}^{\hat\pi_{\nu}}$:
\begin{equation}
\begin{split}
    &\sum_s d_{\mathcal{M}}^{\hat\pi_{\nu}}(s) \Delta V(s)\\
    =&\sum_s d_{\mathcal{M}}^{\hat\pi_{\nu}}(s) [V_{\mathcal{M}, \hat\pi_{\nu}}(s)-V_{\mathcal{M}, \pi}(s)]\\
    =& \gamma \sum_s d_{\mathcal{M}}^{\hat\pi_{\nu}}(s)\sum_a \Delta\pi(a|s) \sum_{s'}\mathcal{P}(s'|s, a)V_{\mathcal{M}, \pi}(s')\\
    +& \gamma\sum_s d_{\mathcal{M}}^{\hat\pi_{\nu}}(s)\sum_a \pi(a|\nu(s))\sum_{s'}\mathcal{P}(s'|s, a)\Delta V(s')\\
    +& \sum_s d_{\mathcal{M}}^{\hat\pi_{\nu}}(s)\sum_a \Delta\pi(a|s) \mathcal{R}(s, a)\\
    =& \gamma \sum_s d_{\mathcal{M}}^{\hat\pi_{\nu}}(s)\sum_a \Delta\pi(a|s) \sum_{s'}\mathcal{P}(s'|s, a)V_{\mathcal{M}, \pi}(s')\\
    +& \sum_{s'}\Delta V(s')\left[\gamma\sum_s d_{\mathcal{M}}^{\hat\pi_{\nu}}(s)\sum_a \pi(a|\nu(s)) \mathcal{P}(s'|s, a)\right]\\
    +& \sum_s d_{\mathcal{M}}^{\hat\pi_{\nu}}(s)\sum_a \Delta\pi(a|s) \mathcal{R}(s, a).
\end{split}
\end{equation}
By applying Lemma~\ref{state_dis_lem}, we have
\begin{equation}
\begin{split}
\label{16}
    &\sum_s d_{\mathcal{M}}^{\hat\pi_{\nu}}(s) \Delta V(s)\\
    =& \gamma \sum_s d_{\mathcal{M}}^{\hat\pi_{\nu}}(s)\sum_a \Delta\pi(a|s) \sum_{s'}\mathcal{P}(s'|s, a)V_{\mathcal{M}, \pi}(s')\\
    +& \sum_{s'}\Delta V(s') \left[d_{\mathcal{M}}^{\hat\pi_{\nu}}(s') - (1-\gamma)\mathcal{P}(s_0=s')\right]\\
    +& \sum_s d_{\mathcal{M}}^{\hat\pi_{\nu}}(s)\sum_a \Delta\pi(a|s) \mathcal{R}(s, a).
\end{split}
\end{equation}
Similarly, by moving the second term of the right part in (\ref{16}) to the left part, we can deduce that
\begin{equation}
\begin{split}
    &(1-\gamma)\sum_{s'}\Delta V(s') \mathcal{P}(s_0=s')\\
    =& \gamma \sum_s d_{\mathcal{M}}^{\hat\pi_{\nu}}(s)\sum_a \Delta\pi(a|s) \sum_{s'}\mathcal{P}(s'|s, a)V_{\mathcal{M}, \pi}(s')\\
    +& \sum_s d_{\mathcal{M}}^{\hat\pi_{\nu}}(s)\sum_a \Delta\pi(a|s) \mathcal{R}(s, a),
\end{split}
\end{equation}
thus we can calculate that
\begin{equation*}
\begin{split}
    &(1-\gamma) (J_{\mathcal{M}}(\hat\pi_{\nu}) - J_{\mathcal{M}}(\pi)) \\
    =& (1-\gamma)\sum_{s'}\Delta V(s') \mathcal{P}(s_0=s')\\
    =& \gamma \sum_s d_{\mathcal{M}}^{\hat\pi_{\nu}}(s)\sum_a \Delta\pi(a|s) \sum_{s'}\mathcal{P}(s'|s, a)V_{\mathcal{M}, \pi}(s')\\
    +& \sum_s d_{\mathcal{M}}^{\hat\pi_{\nu}}(s)\sum_a \Delta\pi(a|s) \mathcal{R}(s, a)\\
    =& \gamma \mathbb{E}_{s\sim d_{\mathcal{M}}^{\hat\pi_{\nu}}} \mathbb{E}_{a\sim\pi(\cdot|\nu(s))} \left(1- \frac{\pi(a|s)}{\pi(a|\nu(s))}\right) \mathbb{E}_{s'\sim \mathcal{P}(\cdot|s, a)} V_{\mathcal{M}, \pi}(s')\\
    +& \mathbb{E}_{s\sim d_{\mathcal{M}}^{\hat\pi_{\nu}}} \mathbb{E}_{a\sim\pi(\cdot|\nu(s))} \left(1- \frac{\pi(a|s)}{\pi(a|\nu(s))}\right) \mathcal{R}(s, a).
\end{split}
\end{equation*}
And we can prove:
\begin{equation}
\begin{split}
    &J_{\mathcal{M}}(\hat\pi_{\nu}) - J_{\mathcal{M}}(\pi) \\
    =& \frac{\gamma}{1-\gamma} \mathbb{E}_{s\sim d_{\mathcal{M}}^{\hat\pi_{\nu}}} \mathbb{E}_{a\sim\pi(\cdot|\nu(s))} \left(1- \frac{\pi(a|s)}{\pi(a|\nu(s))}\right) \mathbb{E}_{s'\sim \mathcal{P}(\cdot|s, a)} V_{\mathcal{M}, \pi}(s')\\
    +& \frac{1}{1-\gamma}\mathbb{E}_{s\sim d_{\mathcal{M}}^{\hat\pi_{\nu}}} \mathbb{E}_{a\sim\pi(\cdot|\nu(s))} \left(1- \frac{\pi(a|s)}{\pi(a|\nu(s))}\right) \mathcal{R}(s, a).
\end{split}
\end{equation}
Similar to the proof of the Theorem 1, we have $ |V_{\mathcal{M}, \pi}(s)-\bar{V}_{\mathcal{M}, \pi}|\leq \frac{\hat{V}_{\mathcal{M}, \pi}}{2}$ holds for every state $s$ and $ \mathbb{E}_{a\sim\pi(\cdot|\nu(s))} \left(1- \frac{\pi(a|s)}{\pi(a|\nu(s))}\right) = 0$ holds. Thus we can prove that
\begin{equation}
\begin{split}
    & |J_{\mathcal{M}}(\pi) - J_{\mathcal{M}}(\hat\pi_{\nu})| \\
    \leq & \frac{\gamma}{1-\gamma} \mathbb{E}_{s\sim d_{\mathcal{M}}^{\hat\pi_{\nu}}} \mathbb{E}_{a\sim\pi(\cdot|\nu(s))} \left| 1- \frac{\pi(a|s)}{\pi(a|\nu(s))}\right|\\
    &\mathbb{E}_{s'\sim \mathcal{P}(\cdot|s, a)} \left|V_{\mathcal{M}, \pi}(s') - \bar{V}_{\mathcal{M}, \pi}\right|\\
    +& \frac{1}{1-\gamma}\mathbb{E}_{s\sim d_{\mathcal{M}}^{\hat\pi_{\nu}}} \mathbb{E}_{a\sim\pi(\cdot|\nu(s))} \left| 1- \frac{\pi(a|s)}{\pi(a|\nu(s))}\right| |\mathcal{R}(s, a)|\\
    \leq & \frac{\gamma}{1-\gamma} \mathbb{E}_{s\sim d_{\mathcal{M}}^{\hat\pi_{\nu}}} \mathbb{E}_{a\sim\pi(\cdot|\nu(s))} \left| 1- \frac{\pi(a|s)}{\pi(a|\nu(s))}\right| \frac{\hat{V}_{\mathcal{M}, \pi}}{2}\\
    +& \frac{1}{1-\gamma}\mathbb{E}_{s\sim d_{\mathcal{M}}^{\hat\pi_{\nu}}} \mathbb{E}_{a\sim\pi(\cdot|\nu(s))} \left| 1- \frac{\pi(a|s)}{\pi(a|\nu(s))}\right|\max_{s,a}|\mathcal{R}(s, a)|\\
    \leq & \frac{\gamma}{1-\gamma} \mathbb{E}_{s\sim d_{\mathcal{M}}^{\hat\pi_{\nu}}} \sum_{a} \left|\pi(a|\nu(s)) -  \pi(a|s)\right| \frac{\hat{V}_{\mathcal{M}, \pi}}{2}\\
    +& \frac{1}{1-\gamma}\mathbb{E}_{s\sim d_{\mathcal{M}}^{\hat\pi_{\nu}}} \sum_{a} \left|\pi(a|\nu(s)) -  \pi(a|s)\right| \max_{s,a}|\mathcal{R}(s, a)|\\
    =& \frac{\gamma}{1-\gamma} \mathbb{E}_{s\sim d_{\mathcal{M}}^{\hat\pi_{\nu}}}\max_s D_{\mathrm{TV}}(\pi(\cdot|s), \pi(\cdot|\nu(s))) \hat{V}_{\mathcal{M}, \pi}\\
    +& \frac{2}{1-\gamma}\mathbb{E}_{s\sim d_{\mathcal{M}}^{\hat\pi_{\nu}}}\max_s D_{\mathrm{TV}}(\pi(\cdot|s), \pi(\cdot|\nu(s)) \max_{s,a}|\mathcal{R}(s, a)|.
\end{split}
\end{equation}
Thus we have proven Theorem~\ref{state_disturb}. 
\end{proof}
Furthermore, we can prove that our bound is tighter than the bound in \cite{Huan2020}:
\begin{equation}
\begin{split}
    &|J_{\mathcal{M}}(\pi) - J_{\mathcal{M}}(\hat\pi_{\nu})| \\
    \leq& \frac{\gamma}{1-\gamma} \max_s D_{\mathrm{TV}}(\pi(\cdot|s), \pi(\cdot|\nu(s))) \hat{V}_{\mathcal{M}, \pi}\\
    +& \frac{2}{1-\gamma} \max_s D_{\mathrm{TV}}(\pi(\cdot|s), \pi(\cdot|\nu(s)) \max_{s,a}|\mathcal{R}(s, a)|\\
    \leq& \frac{2\gamma}{1-\gamma} \max_s D_{\mathrm{TV}}(\pi(\cdot|s), \pi(\cdot|\nu(s))) \max_s |V_{\mathcal{M}, \pi}(s)| \\
    +& \frac{2}{1-\gamma} \max_s D_{\mathrm{TV}}(\pi(\cdot|s), \pi(\cdot|\nu(s)) \max_{s,a}|\mathcal{R}(s, a)|\\
    \leq& \left(\frac{2\gamma}{(1-\gamma)^2} + \frac{2}{1-\gamma} \right) \max_s D_{\mathrm{TV}}(\pi(\cdot|s), \pi(\cdot|\nu(s)))\\
    &\max_{s,a}|\mathcal{R}(s, a)|. 
\end{split}
\end{equation}

% \subsection{The Proof of Theorem \ref{thm-5}}
\subsection{The Proof of Theorem~\ref{thm-5}}
\label{thm_pf_5}

\begin{proof}
We first calculate $- \mathrm{VaR}_{\alpha}(-V(s))$ as
\begin{equation*}
\begin{split}
    &-\mathrm{VaR}_{\alpha}(-V(s))\\
    =& -\min \{z|F_{V(s)}(-z)\leq 1- \alpha\}\\
    =&-\min \{z| 1 - F_{V(s)}(-z)\ge \alpha\} \\
    =& \max \{-z| 1 - F_{V(s)}(-z)\ge \alpha\}\\ 
    =& \max \{z| F_{V(s)}(z)\leq 1- \alpha\}.
\end{split}
\end{equation*}
Therefore, we have
\begin{equation*}
\begin{split}
      &-\mathrm{CVaR}_{\alpha}(-V(s))\\
      =&  -\mathbb{E}_{z\sim V(s)}\{-z|-z\ge \mathrm{VaR}_{\alpha}(-V(s))\}.\\
      =& \mathbb{E}_{z\sim V(s)}\{z|-z\ge \mathrm{VaR}_{\alpha}(-V(s))\}\\
      =& \mathbb{E}_{z\sim V(s)}\{z| z\leq    -\mathrm{VaR}_{\alpha}(-V(s))\}\\
      =& \mathbb{E}_{\tau}\{D(\tau)| V(s_0)\leq -\mathrm{VaR}_{\alpha}(-V(s))\}\\
      \overset{1}\ge & \mathbb{E}_{\tau}\{D(\tau)| D(\tau)\leq -\mathrm{VaR}_{\alpha}(-D(\tau))\}\\
      =&-\mathrm{CVaR}_{\alpha}(-D(\tau)).
\end{split}
\end{equation*}
% Here the inequality holds since $P(V(s_0)\leq -\mathrm{VaR}_{\alpha}(-V(s)))$ $= P(D(\tau)\leq -\mathrm{VaR}_{\alpha}(-D(\tau))) = \alpha$. Thus we have proven Theorem 3.
Now we will prove the inequality 1. Actually, we set $A = \{\tau = (s_0, a_1, r_1, ...)| V(s_0) \leq -\mathrm{VaR}_{\alpha}(-V(s))\}$ and $B = \{\tau|D(\tau)\leq -\mathrm{VaR}_{\alpha}(-D(\tau))\}$. By the definition of VaR, we have
\begin{equation*}
\begin{split}
    &P(\tau\in A) = P(V(s_0) \leq -\mathrm{VaR}_{\alpha}(-V(s)))\\
    %=& P\left(Z \leq \max\{z| F_Z(z) \leq 1-\alpha\}\right) \\
    =& P\left(Z \leq -\mathrm{VaR}_{\alpha}(-Z)\right) \\
    = &P(D(\tau)\leq -\mathrm{VaR}_{\alpha}(-D(\tau))) = P(\tau\in B),
\end{split}
\end{equation*} 
here $Z$ can be an arbitrary random variable. Since $B$ includes \textbf{all} trajectories of which the return is lower than $-\mathrm{VaR}_{\alpha}(-D(\tau))$, for any trajectories set $S$ satisfying $P(\tau\in S) = P(\tau\in B)$, we have $\mathbb{E}_{\tau\in S}D(\tau) \ge \mathbb{E}_{\tau\in B}D(\tau)$. Specially, we have $\mathbb{E}_{\tau\in A}D(\tau) \ge \mathbb{E}_{\tau\in B}D(\tau)$ and can prove that
\begin{equation*}
\begin{split}
      & \mathbb{E}_{\tau}\{D(\tau)| V(s_0)\leq -\mathrm{VaR}_{\alpha}(-V(s))\}
      =  \mathbb{E}_{\tau\in A}D(\tau)\\ \ge & \mathbb{E}_{\tau\in B}D(\tau)
      =  \mathbb{E}_{\tau}\{D(\tau)| D(\tau)\leq -\mathrm{VaR}_{\alpha}(-D(\tau))\}.
\end{split}
\end{equation*}
Thus the inequality 1 holds and we have proven Theorem~\ref{thm-5}.
%Thus we have proven Theorem \ref{thm-5}.
\end{proof}

% \subsection{The Proof of Theorem \ref{lower_bound}}
\subsection{The Proof of Theorem~\ref{lower_bound}}
\label{proof_lower_bound}

\begin{proof}
Since $M$ is the upper bound of the total reward of every trajectory, we have $J(\pi_s) \leq M$. Here we prove this result by considering two scenarios.

In the first case, if $\pi_s$ satisfies that $-\mathrm{CVaR}_{\alpha}(-D(\pi_s))\ge \beta$. Obviously, we have $\pi_c(\alpha, \beta) = \pi_s$, thus 
\begin{equation*}
    J(\pi_c(\alpha, \beta)) = J(\pi_s) \ge \frac{J(\pi_s) - \alpha M}{1-\alpha}.
\end{equation*}
Otherwise, if $\pi_s$ satisfies that $-\mathrm{CVaR}_{\alpha}(-D(\pi_s)) < \beta$, we set $B = -\mathrm{VaR}_{\alpha}(-D(\pi_{c}(\alpha,\beta)))$ and have:
\begin{equation*}
\begin{split}
    &J(\pi_c(\alpha,\beta))\\
    =& \int_{\tau\sim\pi_c(\alpha,\beta)}p(\tau)D(\tau)d\tau\\
    =& \int_{D(\tau)\leq B}p(\tau)D(\tau)d\tau + \int_{D(\tau)>B}p(\tau)D(\tau)d\tau\\
    \ge& \int_{D(\tau)\leq B}p(\tau)D(\tau)d\tau + \int_{D(\tau)>B}p(\tau)B d\tau\\
    =& -\alpha \mathrm{CVaR}(-D(\pi_c(\alpha,\beta))) + B(1-\alpha)\\
    \overset{1}{\ge}& \beta\alpha + B(1-\alpha)\\
    \ge & \beta\alpha + \beta(1-\alpha)\\
    =&\beta,\\
\end{split}
\end{equation*}
here the inequality 1 hold since $-\mathrm{CVaR}_{\alpha}(-D(\pi_c(\alpha,\beta))) \ge \beta$. By the similar way, we set:
\begin{equation*}
\begin{split}
A=&-\mathrm{VaR}_{\alpha}(-D(\pi_{s}))=\max \{z| F_{D(\pi_{s})}(z)\leq 1- \alpha\}, \\
\end{split}
\end{equation*}
thus
\begin{equation*}
\begin{split}
    J(\pi_s) &= \int_{\tau\sim\pi_s}p(\tau)D(\tau)d\tau\\
    &= \int_{D(\tau)\leq A}p(\tau)D(\tau)d\tau + \int_{D(\tau)>A}p(\tau)D(\tau)d\tau\\
    &\leq \int_{D(\tau)\leq A}p(\tau)D(\tau)d\tau + \int_{D(\tau)>A}p(\tau)Md\tau\\
    &= -\mathrm{CVaR}_{\alpha}(-D(\pi_s))(1-\alpha) + M\alpha\\
    &\leq \beta(1-\alpha) + M\alpha\\
    &\leq J(\pi_c(\alpha,\beta))(1-\alpha) + M\alpha.\\
\end{split}
\end{equation*}
So we have proven $J(\pi_c(\alpha,\beta)) \ge \frac{J(\pi_s) - \alpha M}{1-\alpha}$.
\end{proof}

% \subsection{The Proof of Equivalently Deforming problem (\ref{raw_problem})}
\subsection{The Proof of Equivalently Deforming problem (\ref{raw_problem})}
\label{proof_deform_problem}
% In this part, we will equivalently deforming problem (\ref{raw_problem}) as
In this part, we will equivalently deforming problem (\ref{raw_problem}) as
\begin{equation*}
\begin{split}
&\max_{\theta} J(\pi_{\theta})\quad s.t.-\mathrm{CVaR}_{\alpha}(-D(\pi_{\theta}))\ge \beta\\
\Leftrightarrow& \min_{\theta} -J(\pi_{\theta})\quad s.t.\mathrm{CVaR}_{\alpha}(-D(\pi_{\theta}))\leq -\beta\\
\overset{1}\Leftrightarrow& \min_{\theta} -J(\pi_{\theta})\quad s.t.\min_{\eta\in R}\{\eta + \frac{1}{1-\alpha}\mathbb{E}[(-D(\pi_{\theta})-\eta)^+]\}\leq -\beta\\
\Leftrightarrow& \min_{\theta} -J(\pi_{\theta})\quad s.t.\min_{\eta\in R}\{-\eta + \frac{1}{1-\alpha}\mathbb{E}[(-D(\pi_{\theta})+\eta)^+]\}\leq -\beta\\
\Leftrightarrow& \min_{\theta,\eta} -J(\pi_{\theta})\quad s.t.-\eta + \frac{1}{1-\alpha}\mathbb{E}[(-D(\pi_{\theta})+\eta)^+]\leq -\beta.
\end{split}
\end{equation*}
Here we derive the formula 1 since CVaR owns the property~\cite{risk,CVaRnips2}:
\begin{equation}
    \mathrm{CVaR}_{\alpha}(Z) = \min_{\eta \in R}\left\{ \eta +\frac{1}{1-\alpha}\displaystyle \mathbb{E}[(Z-\eta)^+] \right\}.
\end{equation} 
So we have proven it.

\subsection{Calculating the Gradient of $L(\theta,\eta,\lambda)$}
\label{proof_gradient}

In this part, we will calculate the gradient $\nabla_{\eta}L(\theta,\eta,\lambda),$ $\nabla_{\theta}L(\theta,\eta,\lambda)$ and $\nabla_{\lambda}L(\theta,\eta,\lambda)$ of the function $L(\theta,\eta,\lambda)$ by using the methods in~\cite{CVaRnips}. %Recall that
% \begin{equation*}
% \begin{split}
% &L(\theta,\eta,\lambda) \\
% = &-J(\pi_{\theta})+\lambda \left( -\eta+\frac{1}{1-\alpha}\mathbb{E}[(-D(\pi_{\theta})+\eta)^+]+ \beta\right).
% \end{split}
% \end{equation*}
First we can expand the expectation and derive that
\begin{equation*}
\begin{split}
&L(\theta,\eta,\lambda) \\
=& -J(\pi_{\theta})+\lambda \left( -\eta+\frac{1}{1-\alpha}\mathbb{E}[(-D(\pi_{\theta})+\eta)^+]+ \beta\right)\\
=& -\sum_{\xi}P_{\theta}(\xi)D(\xi) +\frac{\lambda}{1-\alpha}\sum_{\xi}P_{\theta}(\xi)(-D(\xi)+\eta)^+\\
&+ \lambda(\beta-\eta).
\end{split}
\end{equation*}
Since $P_{\theta}(\xi)$ only depends on $\theta$ and $\xi$, we have directly calculate the gradient of $\lambda$ as
\begin{equation*}
\begin{split}
    \nabla_{\lambda}L(\theta,\eta,\lambda)
    =&\frac{1}{1-\alpha}\sum_{\xi}P_{\theta}(\xi)(-D(\xi)+\eta)^++ \beta -\eta\\
    =&\frac{1}{1-\alpha}\mathbb{E}_{\xi\sim\pi_{\theta}}(-D(\xi)+\eta)^++ \beta -\eta.
\end{split}
\end{equation*}
Then we calculate the gradient of $\eta$. Since $(-D(\xi)+\eta)^+$ isn't differentiable to $\eta$ at the point of $\eta= D(\xi)$, we consider its semi-gradient as
\begin{equation*}
\begin{split}
& \nabla_{\eta}(-D(\xi)+\eta)^+ =
\left\{
\begin{aligned}
&0\quad\quad\quad\quad\quad\quad \eta < D(\xi)\\
&q\ (0\leq q\leq 1) \quad \eta = D(\xi)\\
&1\quad\quad\quad\quad\quad\quad \eta > D(\xi)\\
\end{aligned}
\right.
\end{split}
\end{equation*}
And we can calculate the gradient of $\eta$ as below:
\begin{equation*}
\begin{split}
&\nabla_{\eta}L(\theta,\eta,\lambda)\\
=& \frac{\lambda}{1-\alpha}\sum_{\xi}P_{\theta}(\xi)\nabla_{\eta}(-D(\xi)+\eta)^+ -\lambda\\
=& \frac{\lambda}{1-\alpha}\sum_{\xi}P_{\theta}(\xi)\textbf{1}\{\eta> D(\xi)\} \\
+& \frac{\lambda q}{1-\alpha}\sum_{\xi}P_{\theta}(\xi)\textbf{1}\{\eta= D(\xi)\} -\lambda\\
=& \frac{\lambda}{1-\alpha}\sum_{\xi}P_{\theta}(\xi)\textbf{1}\{\eta\ge D(\xi)\} -\lambda \quad (\text{set}\ q = 1)\\
=& \frac{\lambda}{1-\alpha}\mathbb{E}_{\xi\sim\pi_{\theta}}\textbf{1}\{\eta\ge D(\xi)\}) -\lambda.
\end{split}
\end{equation*}
Finally, we will calculate the gradient of $\theta$ as
\begin{equation*}
\begin{split}
&\nabla_{\theta}L(\theta,\eta,\lambda) \\
=& -\sum_{\xi}\nabla_{\theta}P_{\theta}(\xi)D(\xi) + \frac{\lambda}{1-\alpha}\sum_{\xi}\nabla_{\theta}P_{\theta}(\xi)(-D(\xi)+\eta)^+\\
=&\sum_{\xi}\nabla_{\theta}P_{\theta}(\xi) \left(-D(\xi) + \frac{\lambda}{1-\alpha}(-D(\xi)+\eta)^+\right)\\
=&\sum_{\xi}(\nabla_{\theta}\log P_{\theta}(\xi))P_{\theta}(\xi)\left(-D(\xi) + \frac{\lambda}{1-\alpha}(-D(\xi)+\eta)^+\right)\\
=& -\mathbb{E}_{\xi\sim\pi_{\theta}}(\nabla_{\theta}\log P_{\theta}(\xi))\left(D(\xi) 
-\frac{\lambda}{1-\alpha}(-D(\xi)+\eta)^+ \right).
\end{split}
\end{equation*}
Thus we have calculated these three gradients.

% \newpage
\begin{figure*}[t]
\centering
\includegraphics[height=3.8cm,width=14cm]{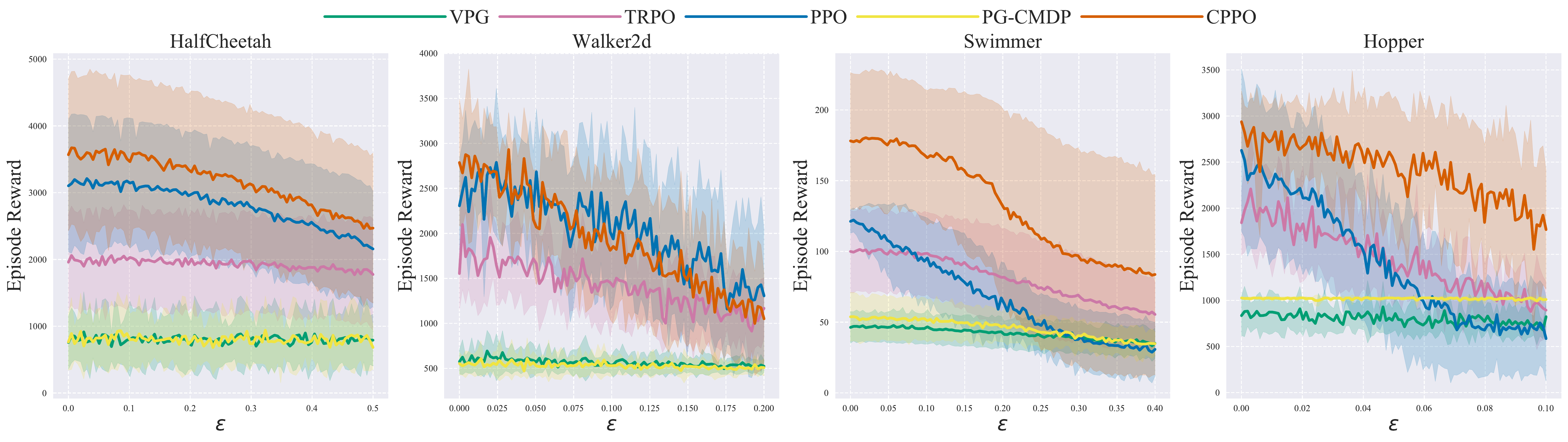}
\caption{Cumulative reward curves for VPG, TRPO, PPO, PG-CMDP and our CPPO under observation adversarial noises. The x-axes indicates the range of the disturbance, and the y-axes indicates the average performance of the algorithm under the state observation adversarial noises.}
\label{fig_ob_advery}
% \vspace{-1em}
\end{figure*}

\newpage

\section{Supplementary Experiment}
\label{supp_experiment}

In this part, we evaluate the robustness of each algorithm under state observation adversarial disturbance, we calculate one-step FGSM~\cite{goodfellow2014explaining} at each timestep and add it to the observation. Furthermore, we plot the performance of the trained policies under state observation adversarial disturbance in Figure \ref{fig_ob_advery}. As shown in the figure, CPPO has made significant progress in Swimmer, Hopper and HalfCheetah compared to baselines. Therefore, our CPPO enables us to maintain robustness under state observation adversarial disturbance, which is also because our CPPO controls the risk theoretically related to the robustness under observation disturbance.

\end{document}